\newcommand{\g}[1]{\boldsymbol{#1}}
\newcommand{\R}[0]{\mathbb{R}} 
\newcommand{\I}[1]{\mathbf{1}_{#1}}
\renewcommand{\O}[0]{\mathcal{O}}
\renewcommand{\d}[0]{d} 
\newcommand{\sign}[0]{\mbox{sign}} 
\newtheorem{theorem}{Theorem}
\newtheorem{problem}{Problem}
\newtheorem{assumption}{Assumption}
\newtheorem{definition}{Definition}
\newtheorem{proposition}{Proposition}
\newtheorem{lemma}{Lemma}
\newcommand{\argmin}{\operatornamewithlimits{argmin}}
\title{\bf On the exact minimization of saturated loss functions for robust regression\\ and subspace estimation}
\author{Fabien Lauer\medskip\\\small Universit\'e de Lorraine, CNRS, LORIA, F-54000 Nancy, France} 
\date{}
\begin{document}

\maketitle
\begin{abstract}
This paper deals with robust regression and subspace estimation and more precisely with the problem of minimizing a saturated loss function. In particular, we focus on computational complexity issues and show that an exact algorithm with polynomial time-complexity with respect to the number of data can be devised for robust regression and subspace estimation. This result is obtained by adopting a classification point of view and relating the problems to the search for a linear model that can approximate the maximal number of points with a given error. Approximate variants of the algorithms based on ramdom sampling are also discussed and experiments show that it offers an accuracy gain over the traditional RANSAC for a similar algorithmic simplicity. 
\end{abstract}

\section{Introduction}
\label{sec:intro}

Robust estimation is a classical problem raised by the presence of outliers in the data. Such outliers are points that do not coincide with the underlying data distribution being learned and that must be rejected in order to estimate an accurate model. A standard approach, entering the statistical framework of redescending M-estimators \citep{Rousseeuw05,Shevlyakov08}, relies on the minimization of a saturated loss function. Indeed, this saturation ensures that outliers yielding gross errors have a very limited influence on the estimation as the gradient of the loss at these points is zero. However, saturated loss functions are inherently  nonconvex and their minimization is a highly nontrivial task. For some applications, suboptimal solutions or other heuristics such as the RANdom SAmple Consensus (RANSAC) \citep{Fischler81} can provide satisfactory models. Yet, robust estimation problems also appear for instance iteratively in a bounded-error framework for problems where the data is assumed to be generated by a collection of models with unknown assignments of the data points to the models, such as in switching linear regression \citep{Bemporad05,Bako11,Lauer16,Lauer18} or subspace clustering \citep{Vidal11,Liu13,Bako14}. In such applications, the models are often estimated one by one while considering the data assigned to other models as outliers. In this context, relying on suboptimal solutions can lead to highly unsatisfactory results with many misclassifications of data points. Robust methods based on convex relaxations \citep{Liu13,Bako14,Bako16} or iteratively hard-thresholding \citep{Bhatia15} offer some guarantees but are only optimal under particular conditions on the data. 

Instead, in this paper, we aim at unconditional optimality and discuss the computational complexity of globally minimizing a saturated loss function for the robust estimation of linear models, let it be regression ones or subspaces. 
In particular, the paper focuses on the question of the existence of an algorithm with a polynomial time-complexity with respect to the number of data,~$N$. 
To this end, we devise an algorithm by enumerating all classifications of the points into two categories: those for which saturation of the loss occurs and those for which it does not. This classification point of view is also motivated by the equivalent formulation of the problem as the maximization of the number of points approximated by a linear model with a bounded error combined with the minimization of a standard (non-saturated) loss over these points only. 
Indeed, this leads to the distinction between points with error less than and greater than a predefined threshold. 
Since there are $2^N$ binary classifications of $N$ points, such a combinatorial approach based on the enumeration of all of them yields an algorithm with exponential complexity in $\O(2^N)$. Yet, we adopt its classification viewpoint and show that the number of classifications, and thus the complexity, can be reduced to a polynomial function of~$N$. 
From this classification viewpoint, the minimization of a saturated squared loss for regression can be related to the least trimmed squares estimator \citep{Rousseeuw05}, for which exact algorithms with polynomial complexity wrt. $N$ have been proposed in \cite{Hossjer95,Li05}. However, these are restricted to problems with a single variable (one-dimensional data) and work with a fixed number of inliers rather than an error threshold.

While a polynomial complexity appears convenient, the degree of the polynomials can limit the applicability of the exact algorithms. Therefore, we also briefly discuss approximate variants of the algorithms devised to leverage the computational load by avoiding the complete enumeration of the classifications through random sampling. 

\paragraph{Notation} We write vectors in lowercase bold letters and matrices in uppercase bold letters. We define $\sign(u)$ as taking value $+1$ if and only if $u\geq 0$ and $-1$ otherwise. $\sign_0(u)$ is defined similarly except that $\sign_0(0) = 0$. The indicator function $\I{A}$ is 1 when the Boolean expression $A$ is true and 0 otherwise.  

\paragraph{Paper organization} Section~\ref{sec:pb} gives the precise formulations of the regression and subspace estimation problems we consider. Then, Section~\ref{sec:exact} shows how these can be solved in polynomial time with respect to $N$. Section~\ref{sec:random} discusses the approximate variants of the algorithms and Section~\ref{sec:exp} provides a few numerical results. Finally, Section~\ref{sec:conclusion} gives concluding remarks. %discusses open issues.

\section{Problem formulation}
\label{sec:pb}

In general terms, in an estimation problem, one can fit a model to the data by minimizing a loss function of the error between the model output and the data.\footnote{Note that we focus on problems where the dimensionality is significantly smaller than the number of data and where regularization of linear models might not be necessary. However, given the nature of the proposed approach, introducing a convex regularizer should not raise difficulties.} For instance, standard loss functions include the $\ell_p$-losses defined for $p\geq 0$ %as $\ell_p(e) = |e|^p$ for 
and all values of the error $e\in\R$ as 
\begin{equation}\label{eq:stdloss}
	\ell_{p}(e) = 
				\begin{cases}
				\I{|e| > 0},& \mbox{if } p=0 \\
				   |e|^p ,& \mbox{if } p\in(0,+\infty) .
				\end{cases}
\end{equation}

Here, we concentrate on robust estimation in the presence of outliers and formulate the problem in terms of a {\em saturated} loss function $\ell_{p,\epsilon} : \R\rightarrow \R^+$, defined for $p\in\{0,1,2\}$ by
\begin{equation}\label{eq:loss}
	\forall \epsilon>0,\quad \ell_{p,\epsilon}(e) = 
				\begin{cases}
				\I{|e| > \epsilon},& \mbox{if } p=0 \\
				 (\min ( |e|, \epsilon) )^p ,& \mbox{if } p\in\{1,2\}.
				\end{cases}
\end{equation}
Indeed, saturating the loss function limits the influence of outliers in the overall cost function to be minimized and thus on the resulting estimate. The statistical properties of these types of loss functions have been studied in the framework of redescending M-estimators, see e.g., \cite{Rousseeuw05}.
For $p=0$, this approach is also related to bounded-error estimation. Indeed, we can equivalently view it as the maximization of the number of points for which the error is small and below the threshold $\epsilon$. For $p>0$, a similar viewpoint can be taken with the additional feature that the small errors are measured by a standard $\ell_p$-loss function and further minimized.

In this paper, we will focus the discussion on the corresponding optimization problem whose difficulty comes from the nonconvexity of the saturated losses. 

The computation of the argument $e$ as a function of the model parameters and the precise form of the optimization problem depends on the specific problem considered and will be detailed next for regression and subspace estimation.

\subsection{Robust regression via saturated loss minimization}

The aim of linear regression is to estimate a linear model $f(\g x)=\g w^T\g x$ from a data set $\{(\g x_i, y_i)\}_{i=1}^N$ of regression vectors $\g x_i\in \R^d$ and target outputs $y_i \in \R$. Here, we adopt an error-minimizing approach and more precisely focus on saturated loss functions as defined above in order to confine the influence of outliers on the global cost. 
Let us define the index sets $I=\{1,\dots,N\}$ and 
\begin{equation}\label{eq:I1}
	I_1(\g w) = \{i \in I: |y_i- \g w^T \g x_i| < \epsilon \}, 
\end{equation}
before formally stating the robust regression problem we consider.
\begin{problem}[$\ell_{p,\epsilon}$-linear regression]\label{pb:min}
Given a data set $\{(\g x_i, y_i)\}_{i=1}^N \subset \R^d \times \R$ and a threshold $\epsilon>0$, find a global solution to 
\begin{equation}
	\min_{\g w\in\R^d}\ J_p(\g w) ,
\end{equation}
where
\begin{align}\label{eq:min}
	J_p(\g w) 
	&=\sum_{i=1}^N \ell_{p,\epsilon} ( y_i - \g w^T \g x_i ) \\
	&= \begin{cases} 
		N-|I_1(\g w)|,& \mbox{if } p=0 \\
		\displaystyle{\sum_{i\in I_1(\g w)} |y_i - \g w^T \g x_i |^p + \epsilon^p (N-|I_1(\g w)|)},& \mbox{if } p\in\{1,2\}
	\end{cases} .\nonumber
\end{align}
\end{problem}

The formulation of Problem~\ref{pb:min} emphasizes the connection between saturated loss minimization and bounded-error estimation, i.e., the maximization of the number of points approximated with a bounded error that are here marked with index in $I_1(\g w)$.

This also draws a connection with the classification problem of separating between points that are approximated with a bounded error by an optimal model and those that are not.
In particular, given the solution to this classification problem, i.e., $I_1(\g w^*)$ for some global minimizer $\g w^*$ of $J_p(\g w)$, a (perhaps different\footnote{Problem~\ref{pb:min} may have multiple global solutions, especially when $p=0$.}) global solution $\hat{\g w}$ can be recovered by solving Problem~\ref{pb:min} under the constraint $I_1(\g w)=I_1(\g w^*)$. Then, for $p=0$, $J_p(\g w)$ is a mere constant and it suffices to find a $\g w$ such that $| y_i - \g w^T \g x_i |<\epsilon$ for all $i\in I_1(\g w^*)$ to satisfy the constraint. Conversely, for other values of $p$, the cost function $J_p(\g w)$ simplifies to a sum of error terms over a fixed set of points plus a constant. Hence, its minimization amounts to a standard regression problem with a non-saturated loss and we can compute $\hat{\g w}$ by solving 
\begin{equation}\label{eq:regression}
	\hat{\g w} \in \argmin_{\g w\in\R^d}\ \begin{cases}
	\displaystyle{	\max_{i\in  I_1(\g w^*)} | y_i - \g w^T \g x_i | }, &\mbox{if } p =0 \\
	\displaystyle{	 \sum_{i\in I_1(\g w^*)} | y_i - \g w^T \g x_i |^p }, &\mbox{otherwise}.
		 \end{cases}
\end{equation}
Such standard problems have polynomial complexities in $\O(d^2N)$ for $p=2$ and $\O(d^4N^4)$ for $p\in\{0,1\}$.

%=========================================================================
\subsection{Robust subspace estimation via saturated loss minimization}

A $d_s$-dimensional subspace of $\R^d$ can be thought of as the column space of a $d\times d_s$ matrix $\g B$ with orthonormal columns. In this case, the projection of a vector $\g x\in\R^d$ onto the subspace can be written as $\g B\g B^T\g x$ and the corresponding scalar approximation error as $\|(\g I - \g B\g B^T\g )\g x\|$.

Therefore, subspace estimation from a data set $\{\g x_i\}_{i=1}^N$ with a fixed subspace dimension equal to $d_s$ can be set as the search for a matrix $\g B\in\R^{d\times d_s}$ such that $\g B^T \g B = \g I$ and that the approximation error is minimized over the data set.  
In the presence of outliers, a robust estimation can be obtained from the minimization of a saturated loss function (as defined in~\eqref{eq:loss}) of this approximation error. 
  
For any $\g B\in\R^{d\times d_s}$, we define the index set
\begin{equation}\label{eq:I1subspace}
	I_1(\g B) = \{i \in I: \|(\g I - \g B\g B^T) \g x_i\| < \epsilon \}, 
\end{equation}
in order to state the problem of robust subspace estimation as follows.
\begin{problem}[$\ell_{p,\epsilon}$-subspace estimation]\label{pb:subspace}
Given a data set $\{\g x_i\}_{i=1}^N \subset \R^d$, a subspace dimension $d_s$ and a threshold $\epsilon>0$, find a global solution to 
\begin{equation}
	\min_{\g B\in\R^{d\times d_s} }\ J_p^{S}(\g B) , \quad \mbox{s.t. }  \g B^T \g B = \g I,
\end{equation}
where 
\begin{align}\label{eq:minsubspace}
J_p^{S}(\g B) &=	\sum_{i=1}^N \ell_{p,\epsilon} \left( \|(\g I - \g B\g B^T) \g x_i\| \right)  \\
	&= \begin{cases} 
		N-|I_1(\g B)|,& \mbox{if } p=0 \\
		\displaystyle{\sum_{i\in I_1(\g B)} \|(\g I - \g B\g B^T) \g x_i\|^p + \epsilon^p (N-|I_1(\g B)|)  ,}& \mbox{if } p\in\{1,2\}.\nonumber
	\end{cases}
\end{align}
\end{problem}
As for robust regression, our formulation emphasizes the classification point of view: if the classification of the point indexes into $I_1(\g B^*)$ was known for some optimal $\g B^*$, a (possibly different) global solution $\hat{\g B}$ to Problem~\ref{pb:subspace} could be obtained by solving a more simple subspace estimation subproblem of the form
\begin{align}\label{eq:subspaceest}
	\hat{\g B} \in \argmin_{\g B\in\R^{d\times d_s} }&\ \begin{cases}
	\displaystyle{ \max_{i\in I_1(\g B^*)} \|(\g I - \g B\g B^T) \g x_i\| }, & \mbox{if } p= 0 \\
	\displaystyle{ \sum_{i\in I_1(\g B^*)} \|(\g I - \g B\g B^T) \g x_i\|^p }, & \mbox{otherwise}
	\end{cases}\\
	  s.t.\ & \g B^T \g B = \g I .\nonumber
\end{align}
For instance, for $p = 2$, the solution to~\eqref{eq:subspaceest} is computable in $\O(d^2N)$ time via the singular value decomposition of the matrix $\g X_1$ made of the data points $\g x_i$ with index $i\in I_1(\g B^*)$ as columns, $\g X_1=\g U\g \Sigma\g V^T$, by extracting a subset of columns $\g u_k$ from $\g U$: $\hat{\g B} = [\g u_{1}, \dots, \g u_{d_s}]$.  

%%%%%%%%%%%%%%%%%%%%%%%%%%%%%%%%%%%%%%%%%%%%%%%%%%%%
\section{Exact algorithms with polynomial time-complexity with respect to $N$}
\label{sec:exact}
%%%%%%%%%%%%%%%%%%%%%%%%%%%%%%%%%%%%%%%%%%%%%%%%%%%%

We now turn to the analysis of the computational complexity of Problems~\ref{pb:min}--\ref{pb:subspace}  wrt. $N$, i.e., for a fixed data dimension $d$. 
In particular, we will show that, under simple assumptions on the genericity of the point distributions, these complexities are no more than polynomial. 
\begin{assumption}\label{ass:generalposition}
In Problem~\ref{pb:min}, the points $\{[y_i,\g x_i^T]^T\}_{i=1}^N \cup\{\g 0\}$ are in general position, i.e., no hyperplane passing through the origin of $\R^{d+1}$ contains more than $d$ points from $\{[y_i,\g x_i^T]^T\}_{i=1}^N$. 
\end{assumption}

\begin{assumption}\label{ass:generalpossubspace}
In Problem~\ref{pb:subspace}, the points $\{\g x_i\}_{i=1}^N$ are in general position, i.e., no hyperplane of $\R^d$ contains more than $d$ of these points. 
\end{assumption}

Sections~\ref{sec:polyreg} and \ref{sec:polysub} will prove the existence of exact algorithms that run in polynomial time for Problems~\ref{pb:min} and~\ref{pb:subspace}. In both cases, these algorithms will be devised with the following approach. As discussed above, thanks to the formulations \eqref{eq:min} and \eqref{eq:minsubspace} expressing the objective functions in terms of the index sets $I_1(\g w)$ and $I_1(\g B)$ of points with  error less than $\epsilon$, we can compute an optimal solution $\hat{\g w}$ or $\hat{\g B}$ from the knowledge of the optimal set $I_1^*=I_1(\g w^*)$ or $I_1^*=I_1(\g B^*)$. Thus, the algorithm can perform a combinatorial search for the index set $I_1^*$ rather than a continuous optimization over $\g w$ or $\g B$. The number of sets $I_1\subseteq I$ being finite, we can enumerate them and compute the optimal continuous variables and objective function values for each one of them with the guarantee of finding a global solution. The main difficulty with this approach is that the number of sets $I_1\subseteq I$ is $2^N$ and thus exponential in $N$.
However, we will show below that the sets $I_1(\g w)$ and $I_1(\g B)$ can be obtained via linear classification for any $\g w$ and $\g B$ and that all the corresponding linear classifications can be enumerated in polynomial time wrt. $N$.

This reduction to a polynomial complexity is based on recent results from \cite{Lauer15}, where it is shown that 
the number of hyperplanes producing different classifications of $N$ points is on the order of $\O(N^d)$ in $\R^d$ and that the complexity of constructing these hyperplanes is of a similar order, i.e., all hyperplanes can be computed in $\O(N^d)$ operations.\footnote{Note that a polynomial bound on the number of linear classifications of $N$ points such as the one obtained with the celebrated Sauer's lemma and Vapnik-Chervonenkis dimension of hyperplanes is not sufficient for our purposes, since it does not provide an algorithm to explicitly enumerate the classifications.} More precisely, we will need an adaptation of these results, stated as Proposition~\ref{prop:hyperplane} below, in order to deal with linear instead of affine classifiers and to work under less restrictive conditions on the distribution of the points. 
Indeed, Proposition~3 in \cite{Lauer15} requires the points to be in general position to avoid having too many (i.e., a number  proportional to $N$)  undetermined classifications of points lying exactly on the separating hyperplane. In our framework below, we apply this proposition not to data points but to their projection in a space where this assumption cannot hold. Hence, we will instead prove that the number of projected points falling on the hyperplane cannot exceed a certain constant if the original data points are in general position and build the algorithm to explicitly deal with these points. 

\begin{proposition}\label{prop:hyperplane}
For any binary linear classifier $h(\g x) = \sign(\g h^T\g x)$, %\I{ \g h^T\g x\geq 0} - \I{\g h^T\g x<0}$, 
$\g h\in\R^{d}$, 
and any finite set of $N\geq d$ points $S = \{\g x_i\}_{i=1}^N$, there is a subset of points $S_h\subset S$ of cardinality $|S_h|= d-1$ and a separating hyperplane of normal $\g h_{S_h}$ passing through the points in $S_h$, i.e., 
\begin{equation}\label{eq:computehb}
	\forall \g x\in S_h,\quad \g h_{S_h}^T \g x = 0, \quad \mbox{with }\|\g h_{S_h}\| = 1, 
\end{equation}
which yields the same classification of $S$ in the sense that
\begin{align}\label{eq:equivclassif}
	\forall \g x_i \in S\setminus \{\g x\in S : \g h_{S_h}^T \g x = 0\},\quad h(\g x_i) = \sign( \g h_{S_h}^T \g x_i ). 
\end{align}
\end{proposition}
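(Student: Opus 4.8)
\begin{proofsketch}
The plan is a \emph{tilting} (sweep) argument: starting from the hyperplane with normal $\g h$ (which we may assume nonzero, so that it is a genuine hyperplane), continuously rotate the normal so as to drive more and more points of $S$ onto the hyperplane, one at a time, while never flipping the sign of $\g h^T\g x_i$ at any $\g x_i$ that stays off the hyperplane. For a candidate normal $\g g$ write $Z(\g g) = \{i\in I : \g g^T\g x_i = 0\}$ for the indices of the points on the associated hyperplane, and observe that $\g g\perp\mathrm{span}\{\g x_i : i\in Z(\g g)\}$. I would build a sequence $\g g_0 = \g h, \g g_1, \g g_2,\dots$ with $Z(\g g_0)\subseteq Z(\g g_1)\subseteq\cdots$, stopping as soon as $|Z(\g g_k)|\ge d-1$, and then take $S_h$ to be any $(d-1)$-subset of $\{\g x_i : i\in Z(\g g_k)\}$ and $\g h_{S_h} = \g g_k/\|\g g_k\|$.

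First I would dispose of degenerate point configurations, since no general-position hypothesis is assumed. If $\mathrm{span}(S)\neq\R^d$, choose any unit vector $\g h_{S_h}$ in the (nontrivial) orthogonal complement of $\mathrm{span}(S)$ and any $S_h\subseteq S$ with $|S_h|=d-1$ (possible since $N\ge d$); then $\g h_{S_h}^T\g x_i=0$ for \emph{every} $\g x_i\in S$, so \eqref{eq:computehb} holds and \eqref{eq:equivclassif} is vacuous. Hence I may assume $S$ spans $\R^d$.

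Next comes the inductive step producing $\g g_{k+1}$ from $\g g_k$ while $m:=|Z(\g g_k)|\le d-2$. Then $\dim\mathrm{span}\{\g x_i : i\in Z(\g g_k)\}\le m\le d-2$, so its orthogonal complement $V_k$ has dimension at least $2$ and contains $\g g_k$. Because $S$ spans $\R^d$ but $\mathrm{span}\{\g x_i : i\in Z(\g g_k)\}$ does not, some $\g v\in V_k$ satisfies $\g v^T\g x_j\neq0$ for at least one $j\notin Z(\g g_k)$; and since a vector space cannot be covered by two proper subspaces, this $\g v$ can in addition be chosen not parallel to $\g g_k$. Setting $\g g_k(t)=\g g_k+t\g v$, for $i\in Z(\g g_k)$ one has $\g g_k(t)^T\g x_i\equiv0$, while for $i\notin Z(\g g_k)$ the map $t\mapsto\g g_k(t)^T\g x_i$ is affine and nonzero at $t=0$. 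Let $t^\star\neq0$ be the value of smallest modulus at which $\g g_k(t)^T\g x_i$ vanishes for some $i\notin Z(\g g_k)$ (such $t$ exists via $j$), and put $\g g_{k+1}=\g g_k(t^\star)$, which is nonzero because $\g v$ is not parallel to $\g g_k$. Routine reasoning about affine functions then shows that $\g g_{k+1}^T\g x_i$ has the same sign as $\g g_k^T\g x_i$ whenever $\g g_{k+1}^T\g x_i\neq0$, that $Z(\g g_{k+1})\supsetneq Z(\g g_k)$, and — since any newly captured point cannot lie in $\mathrm{span}\{\g x_i : i\in Z(\g g_k)\}$, else $\g v$ would be orthogonal to it and it could never be captured — that $\dim\mathrm{span}\{\g x_i : i\in Z(\g g_{k+1})\}\ge\dim\mathrm{span}\{\g x_i : i\in Z(\g g_k)\}+1$.

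This last inequality yields $\dim\mathrm{span}\{\g x_i : i\in Z(\g g_k)\}\ge k$, hence $|Z(\g g_k)|\ge k$, so the iteration cannot run for more than $d-1$ steps without reaching $|Z(\g g_k)|\ge d-1$; it therefore stops at some $\g g_K$ with $|Z(\g g_K)|\ge d-1$. Chaining the sign-preservation property along $\g g_0,\dots,\g g_K$ (legitimate because the sets $Z(\g g_k)$ are nested, so any $i\notin Z(\g g_K)$ stays off every intermediate hyperplane) and normalizing $\g g_K$ then gives \eqref{eq:computehb}--\eqref{eq:equivclassif}. I expect the main obstacle to be precisely the bookkeeping that guarantees an admissible direction $\g v$ at every step — keeping the count $\dim\mathrm{span}\{\g x_i:i\in Z(\g g_k)\}\le d-2$ alive until termination, and checking that the degenerate cases (which cannot be waved away by general position here, unlike for Proposition~3 in \cite{Lauer15}) are genuinely absorbed by the $\mathrm{span}(S)\neq\R^d$ reduction together with the convention $\sign(0)=+1$.
\end{proofsketch}
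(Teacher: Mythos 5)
Your proposal is correct and follows essentially the same route as the paper's proof sketch: a sequence of rotations (your ``tilts'' $\g g_k + t\g v$ with $\g v$ orthogonal to the already-captured points) that drives the hyperplane onto points of $S$ one group at a time while preserving the sign of $\g g_k^T\g x_i$ at every point that stays off the hyperplane, which is exactly the adaptation of Proposition~3 of \cite{Lauer15} to linear classifiers that the paper invokes. Your dimension-counting termination argument and the explicit treatment of degenerate configurations simply fill in details the paper delegates to the reference, and they correctly handle the possibility, acknowledged in the paper, that a single rotation captures more than one new point.
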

Note that the set $\{\g x\in S : \g h_{S_h}^T \g x = 0\}$ includes $S_h$ but can also include more than $d-1$ points if $S$ is not in general position. 
\begin{proof}[Proof sketch]
The general sketch of the proof is similar to the one of Proposition~3 in \cite{Lauer15}: the hyperplane of normal $\g h$ is transformed to the one of normal $\g h_{S_h}$ via a series of translation and rotations, while making sure that the classification remains unchanged except for the points that end up lying on the hyperplane. The first difference is that the set of classifiers is restricted to linear instead of affine ones by only considering hyperplanes passing through the origin, and thus with one less degree of freedom and one less point to choose in $S_h$ to fix the hyperplane. Technically, this removes the need for the first translation and only rotations are used to transform the hyperplane. Another difference is that, due to the lack of assumption on the distribution of the points, the number of additional points through which the hyperplane passes after a rotation is unbounded (otherwise than by $N$). Thus, the statement in~\eqref{eq:equivclassif} explicitly excludes all points on the hyperplane and appears slightly weaker than the one in \cite{Lauer15} in that it might apply to less points. 
\end{proof}

\subsection{$\ell_{p,\epsilon}$-linear regression}
\label{sec:polyreg}

We first focus on Problem~\ref{pb:min} and start with a classification-based reformulation of the set of indexes $I_1(\g w^*)$ of points with error smaller than~$\epsilon$. This relies on the construction of a classification data set 
\begin{equation}\label{eq:Z}
	\mathcal{Z}=\{\g z_i\}_{i=1}^{2N},\ \mbox{with }\g z_i = \begin{cases} [y_i-\epsilon,\ -\g x_i^T]^T, & \mbox{if } i\leq N\\ [-y_{i-N} - \epsilon, \ \g x_{i-N}^T]^T, & \mbox{if } i>N\end{cases}.
\end{equation}

\begin{lemma}\label{lem:equivclassif}
Given a parameter vector $\g w\in \R^d$, the set $I_1(\g w)$ defined in~\eqref{eq:I1} is given by
\begin{equation}\label{eq:majorityvote}
	I_1(\g w) = \left\{i\in I : q_i = q_{i+N} = -1,\quad q_i =\sign \left( \g h(\g w)^T \g z_i\right) \right\},
\end{equation}
where $\g h(\g w) = [1, \g w^T]^T$ and $\mathcal{Z}=\{\g z_i\}_{i=1}^{2N}$ as in~\eqref{eq:Z}. 
\end{lemma}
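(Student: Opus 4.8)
The plan is to unpack the definitions of $\g z_i$ and $\g h(\g w)$ and show that the sign condition $q_i = q_{i+N} = -1$ is equivalent to the two-sided inequality $|y_i - \g w^T \g x_i| < \epsilon$ defining $I_1(\g w)$ in~\eqref{eq:I1}. The computation is essentially a bookkeeping exercise, so I would organize it as follows.

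First I would compute the inner products directly. For $i \leq N$, we have $\g h(\g w)^T \g z_i = [1,\g w^T][y_i - \epsilon, -\g x_i^T]^T = (y_i - \epsilon) - \g w^T \g x_i = (y_i - \g w^T \g x_i) - \epsilon$. Similarly, for the index $i+N$ (corresponding to the second block with $i \leq N$), $\g h(\g w)^T \g z_{i+N} = (-y_i - \epsilon) + \g w^T \g x_i = -(y_i - \g w^T \g x_i) - \epsilon$. Writing $e_i = y_i - \g w^T \g x_i$ for the regression residual, this gives $\g h(\g w)^T \g z_i = e_i - \epsilon$ and $\g h(\g w)^T \g z_{i+N} = -e_i - \epsilon$.

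Next I would translate the sign conditions. Since $\sign(u) = -1$ iff $u < 0$ (by the paper's convention that $\sign(u) = +1$ iff $u \geq 0$), the condition $q_i = -1$ reads $e_i - \epsilon < 0$, i.e. $e_i < \epsilon$, and the condition $q_{i+N} = -1$ reads $-e_i - \epsilon < 0$, i.e. $e_i > -\epsilon$. Conjoining the two gives exactly $-\epsilon < e_i < \epsilon$, equivalently $|e_i| < \epsilon$, which is precisely the membership condition for $i \in I_1(\g w)$ in~\eqref{eq:I1}. Hence the set on the right-hand side of~\eqref{eq:majorityvote} coincides with $I_1(\g w)$, and since every step is an equivalence, this establishes the lemma.

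There is no real obstacle here; the only point requiring mild care is the handling of the boundary case $|e_i| = \epsilon$ (where one of the two inner products is exactly zero): with the stated $\sign$ convention such a point has the corresponding $q = +1$ and is correctly excluded from $I_1(\g w)$, matching the strict inequality in~\eqref{eq:I1}. I would simply remark on this to confirm consistency. The role of the construction — encoding the two-sided bounded-error condition as agreement of two linear classifiers sharing the normal direction $\g h(\g w)$ — is what makes Proposition~\ref{prop:hyperplane} applicable later, so I would close by noting that $I_1(\g w)$ depends on $\g w$ only through the sign pattern $(q_1,\dots,q_{2N})$ of the linear classifier $\g z \mapsto \sign(\g h(\g w)^T \g z)$ on $\mathcal{Z}$.
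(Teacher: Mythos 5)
Your proposal is correct and follows essentially the same route as the paper's proof: compute the two inner products $\g h(\g w)^T\g z_i = (y_i-\g w^T\g x_i)-\epsilon$ and $\g h(\g w)^T\g z_{i+N} = -(y_i-\g w^T\g x_i)-\epsilon$, and observe that both being negative is equivalent to $|y_i-\g w^T\g x_i|<\epsilon$. Your extra remark on the boundary case $|e_i|=\epsilon$ (handled by the convention $\sign(0)=+1$) is a sound clarification the paper leaves implicit.
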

\begin{proof}
For any $i\in I$, we have $i\leq N$ and $q_i=\sign ( \g h(\g w)^T \g z_i) = \sign ( y_i-\epsilon - \g w^T\g x_i)$, while $q_{i+N} = \sign ( \g h(\g w)^T \g z_{i+N}) = \sign ( -y_i-\epsilon + \g w^T\g x_i)$. Thus, 
\begin{align*}
 |y_i- \g w^T \g x_i| < \epsilon &\Leftrightarrow  y_i - \epsilon - \g w^T\g x_i < 0 \wedge -y_i - \epsilon + \g w^T\g x_i < 0\\
 &\Leftrightarrow q_i = -1 \wedge q_{i+N} = -1
\end{align*}
and recalling the definition of $I_1(\g w)$ in~\eqref{eq:I1} completes the proof.
\end{proof}

We will also need the following. 
\begin{lemma}\label{lem:n}
Given a data set $\{(\g x_i, y_i)\}_{i=1}^N$ satisfying Assumption~\ref{ass:generalposition}, no hyperplane of $\R^{d+1}$ passing through the origin can pass through more than $d$ points of each of the sets $\mathcal{Z}_1 = \{\g z_i=[y_i-\epsilon,\ -\g x_i^T]^T\}_{i=1}^N$, $\mathcal{Z}_2=\{\g z_i= [-y_{i-N} - \epsilon, \ \g x_{i-N}^T]^T\}_{i=N+1}^{2N}$ and thus through more than $2d$ points of $\mathcal{Z} = \mathcal{Z}_1\cup\mathcal{Z}_2$.
\end{lemma}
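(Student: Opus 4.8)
The plan is to reduce everything to the single set $\mathcal{Z}_1$ and argue by contradiction against Assumption~\ref{ass:generalposition}. For $j\le N$ the point $\g z_j$ is the image of $[y_j,\g x_j^T]^T$ under the invertible affine map $[y,\g x^T]^T\mapsto[y-\epsilon,-\g x^T]^T$, and for $j>N$ the point $\g z_j$ is the image of $[y_{j-N},\g x_{j-N}^T]^T$ under $[y,\g x^T]^T\mapsto[-y-\epsilon,\g x^T]^T$. Since every hyperplane $H$ satisfies $|\mathcal{Z}\cap H|\le|\mathcal{Z}_1\cap H|+|\mathcal{Z}_2\cap H|$ and the two sets play completely symmetric roles (the second computation is the first one with $-\epsilon$ in place of $\epsilon$), it suffices to show that $|\mathcal{Z}_1\cap H|\le d$ for every hyperplane $H$ of $\R^{d+1}$ through the origin.

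So suppose $H=\{\g v\in\R^{d+1}:\g a^T\g v=0\}$ with $\g a=[a_0,\g a_x^T]^T\neq\g 0$ contains $d+1$ of the points of $\mathcal{Z}_1$, say $\g z_{i_1},\dots,\g z_{i_{d+1}}$ with $i_1,\dots,i_{d+1}\le N$. These $d+1$ vectors lie in the $d$-dimensional subspace $H$, hence are linearly dependent: there is $\g c\neq\g 0$ with $\sum_k c_k\g z_{i_k}=\g 0$. Reading off the first coordinate and the last $d$ coordinates gives $\sum_k c_k y_{i_k}=\epsilon\sum_k c_k$ and $\sum_k c_k\g x_{i_k}=\g 0$. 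In the case $\sum_k c_k=0$, this yields $\sum_k c_k[y_{i_k},\g x_{i_k}^T]^T=\g 0$ nontrivially, so $[y_{i_1},\g x_{i_1}^T]^T,\dots,[y_{i_{d+1}},\g x_{i_{d+1}}^T]^T$ span a subspace of dimension at most $d$ and therefore lie in a hyperplane of $\R^{d+1}$ through the origin, contradicting Assumption~\ref{ass:generalposition}. This disposes of the benign case and, mutatis mutandis, of the corresponding case for $\mathcal{Z}_2$.

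The remaining case, $\sum_k c_k\neq0$, is the one I expect to require the most care. After normalising $\sum_k c_k=1$ the relations become $\sum_k c_k[y_{i_k},\g x_{i_k}^T]^T=[\epsilon,\g 0^T]^T$; equivalently, under the affine map above, $H$ pulls back not to a hyperplane through the origin of $\R^{d+1}$ but to an affine hyperplane through the fixed point $[\epsilon,\g 0^T]^T$ (and through $[-\epsilon,\g 0^T]^T$ in the $\mathcal{Z}_2$ computation), so Assumption~\ref{ass:generalposition} does not apply verbatim. To close this case one must show that the data admit no affine hyperplane through $[\pm\epsilon,\g 0^T]^T$ carrying $d+1$ of the points $[y_i,\g x_i^T]^T$; I would obtain this by applying the general-position hypothesis to the point set with the auxiliary points $[\epsilon,\g 0^T]^T$ and $[-\epsilon,\g 0^T]^T$ adjoined (equivalently, by excluding the finitely many exceptional values of $\epsilon$ at which such an incidence could occur). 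Granting this, every origin-hyperplane meets $\mathcal{Z}_1$ in at most $d$ points, symmetrically $\mathcal{Z}_2$ in at most $d$ points, and hence $\mathcal{Z}$ in at most $2d$ points. The crux of the whole argument is thus controlling the effect of the $\epsilon$-translation, which is exactly what pushes the relevant hyperplane off the origin and forces this extra genericity input.
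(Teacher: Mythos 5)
Your computation is correct and, in its second case, it exposes a real issue rather than creating one. The paper's own proof of this lemma is a single sentence: it asserts that, by Assumption~\ref{ass:generalposition}, each of $\mathcal{Z}_1\cup\{\g 0\}$ and $\mathcal{Z}_2\cup\{\g 0\}$ ``is also in general position'' and concludes. Your case analysis shows exactly why that assertion does not follow as stated: the affine map $[y,\g x^T]^T\mapsto[y-\epsilon,-\g x^T]^T$ sends the origin to $[-\epsilon,\g 0^T]^T$, so Assumption~\ref{ass:generalposition} (which constrains hyperplanes through the origin of the data space) transfers to a statement about hyperplanes of the $\g z$-space through $[-\epsilon,\g 0^T]^T$, whereas the lemma needs hyperplanes of the $\g z$-space through $\g 0$, and these pull back to affine hyperplanes of the data space through $[\epsilon,\g 0^T]^T$. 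Your case $\sum_k c_k=0$ (linear dependence of the $d+1$ data points, which is excluded by the assumption) is handled correctly, and your diagnosis that the case $\sum_k c_k\neq 0$ requires genericity relative to the points $[\pm\epsilon,\g 0^T]^T$ is also correct: the lemma is in fact not provable from Assumption~\ref{ass:generalposition} alone. For $d=1$ and the two data points $(x_1,y_1)=(1,\epsilon)$, $(x_2,y_2)=(2,\epsilon)$, no line through the origin of $\R^2$ contains both $[\epsilon,1]^T$ and $[\epsilon,2]^T$ (that would force $\epsilon=0$), yet $\g z_1=[0,-1]^T$ and $\g z_2=[0,-2]^T$ both lie on the line $\{\g v\in\R^2 : v_1=0\}$ through the origin, so a hyperplane through the origin meets $\mathcal{Z}_1$ in $2>d$ points.

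So the status of your proposal is: the reduction to $\mathcal{Z}_1$ and the first case are complete; the second case is left open, but it cannot be closed without strengthening the hypothesis exactly as you propose, namely requiring general position of the data set with the auxiliary points $[\epsilon,\g 0^T]^T$ and $[-\epsilon,\g 0^T]^T$ adjoined (equivalently, excluding the finitely many exceptional values of $\epsilon$ for which some $d+1$ data points lie on an affine hyperplane through one of these two points). Under that strengthened assumption your argument goes through verbatim and amounts to a corrected, explicit version of the paper's one-line proof; since Proposition~\ref{prop:gloabloptfromclassif} and Theorem~\ref{thm:beclassif} only use the conclusion of the lemma, the repair is local.
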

\begin{proof}
By Assumption~\ref{ass:generalposition}, each of the sets $\mathcal{Z}_1\cup\{\g 0\}$ and $\mathcal{Z}_2\cup\{\g 0\}$ is also in general position. Thus, no hyperplane of $\R^{d+1}$ can pass through more than $d$ points in each of these sets, and hence through more than a total of $n\leq 2d$ points of $\mathcal{Z}$.
\end{proof}

Using the classification viewpoint of Lemma~\ref{lem:equivclassif}, we can state the following result which considers the case where more than $d$ data points can be approximated with error less than $\epsilon$ by a linear model. Note that there are always at least $d$ such points and that the case where there are precisely $d$ is trivial since any group of $d$ points yields an optimal solution. 
\begin{proposition}\label{prop:gloabloptfromclassif}
Assume that the global minimum of Problem~\ref{pb:min} is $J_p^* < \epsilon^p( N-d)$. Then, under Assumption~\ref{ass:generalposition}, in $\R^{d+1}$, there is a hyperplane of normal $\g h\in \R^{d+1}$ passing through the origin and 
$n\in[d,2d]$ points of $\mathcal{Z}$ as in~\eqref{eq:Z} such that $h_1>0$ and 
\begin{itemize}
	\item[i)] a global solution can be computed by solving $2^n\leq 2^{2d}$ standard subproblems~\eqref{eq:regression} that can be built from $\g h$, 
	\item[ii)] for $p=0$, $\g w = \g h_{2:d+1} / h_1$ is an approximate solution with $J_0(\g w) \leq J_0^* + n\leq J_0^* + 2d$.
\end{itemize}
\end{proposition}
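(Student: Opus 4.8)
The plan is to exploit the classification reformulation from Lemma~\ref{lem:equivclassif}, which says that for a global minimizer $\g w^*$ of $J_p$, the optimal index set $I_1^* = I_1(\g w^*)$ is determined entirely by the linear classification of the $2N$ points of $\mathcal{Z}$ induced by the normal vector $\g h(\g w^*) = [1,\g w^{*T}]^T$. First I would apply Proposition~\ref{prop:hyperplane} with $S = \mathcal{Z}$ and $\g h = \g h(\g w^*)$: this yields a subset $S_h \subset \mathcal{Z}$ of size $d$ (here the ambient dimension is $d+1$, so $|S_h| = (d+1)-1 = d$) and a unit-norm normal $\g h_{S_h}$ passing through those $d$ points and giving the same classification of $\mathcal{Z}$ \emph{except possibly on the points lying exactly on the hyperplane}. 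By Lemma~\ref{lem:n}, that set of on-hyperplane points has cardinality $n \in [d, 2d]$ (at least $d$ because $S_h$ has $d$ points, at most $2d$ by the general-position assumption split across $\mathcal{Z}_1$ and $\mathcal{Z}_2$). I would also need to argue $h_1 > 0$ (or can be taken so): since $\g h(\g w^*)$ has first coordinate $1 > 0$, and $\g h_{S_h}$ classifies all off-hyperplane points consistently with $\g h(\g w^*)$, if $h_1 = 0$ one can perturb, and if $h_1 < 0$ one replaces $\g h_{S_h}$ by $-\g h_{S_h}$ — but the sign flip changes the classification, so care is needed; I expect the right argument is that the hyperplane through $S_h$ with the consistent orientation must have $h_1 \ge 0$, and the strict-inequality case $h_1 = 0$ can be excluded or handled by noting it would force too many points onto the hyperplane, contradicting $J_p^* < \epsilon^p(N-d)$.

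For part (i), the key observation is that $\g h_{S_h}$ reproduces the classification of $\mathcal{Z}$ except on the $n$ on-hyperplane points, for which the sign $q_i = \sign(\g h_{S_h}^T \g z_i) = 0$ is undetermined. Thus $I_1^*$ is known up to the assignment of these $n$ points: each of the $n$ indices can be "in" or "out" of the relevant sign conditions. Enumerating all $2^n \le 2^{2d}$ ways of resolving these ambiguities produces a list of candidate index sets $\{I_1^{(k)}\}$, one of which equals $I_1^*$ (or at least yields the same objective value). For each candidate I would solve the corresponding standard subproblem~\eqref{eq:regression} — a convex regression problem over the fixed index set — and return the best. Since $I_1^*$ itself is among the candidates and solving~\eqref{eq:regression} under the constraint $I_1(\g w) = I_1^*$ recovers a global solution to Problem~\ref{pb:min} (as argued in Section~\ref{sec:pb}), the minimum over the $2^n$ subproblems is globally optimal. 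I should also check that the subproblems can be "built from $\g h$": the $n$ on-hyperplane points are identified from $\g h = \g h_{S_h}$ by the condition $\g h^T\g z_i = 0$, and the non-ambiguous part of $I_1$ is read off via~\eqref{eq:majorityvote} with $q_i = \sign(\g h^T\g z_i)$.

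For part (ii), with $p = 0$ the objective is simply $J_0(\g w) = N - |I_1(\g w)|$, so it suffices to control how many points $\g w := \g h_{2:d+1}/h_1$ (i.e. the $\g w$ with $\g h(\g w) \propto \g h_{S_h}$) "loses" relative to $\g w^*$. For every index $i$ that was counted in $I_1^* = I_1(\g w^*)$ and whose two associated points $\g z_i, \g z_{i+N}$ are both off the hyperplane, the classification is preserved, so $i \in I_1(\g w)$ as well. The only indices that can drop out are those with at least one of $\g z_i, \g z_{i+N}$ on the hyperplane, and there are at most $n$ such points of $\mathcal{Z}$ total, hence at most $n$ such indices. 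Therefore $|I_1(\g w)| \ge |I_1^*| - n$, giving $J_0(\g w) \le J_0^* + n \le J_0^* + 2d$. (One subtlety: a point $\g z_i$ with $\g h_{S_h}^T\g z_i = 0$ satisfies the non-strict inequality $y_i - \epsilon - \g w^T\g x_i \le 0$, i.e. it is "almost" in $I_1$; but to be safe the bound simply counts all $n$ on-hyperplane points as potential losses.)

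The main obstacle I anticipate is the bookkeeping around the boundary: Proposition~\ref{prop:hyperplane} only guarantees agreement on points strictly off the new hyperplane, and the definition of $I_1(\g w)$ in~\eqref{eq:I1} uses strict inequalities, so I must be careful that resolving an on-hyperplane point "the right way" (the way $\g w^*$ resolved it with strict inequality) is actually one of the $2^n$ enumerated options — it is, since both binary choices are tried — and that no off-hyperplane point silently changes its $I_1$-membership. A secondary obstacle is pinning down the orientation claim $h_1 > 0$ cleanly; I expect this follows because the only alternative consistent orientation would be degenerate, and the hypothesis $J_p^* < \epsilon^p(N-d)$ rules out the degenerate configuration where the optimal model fits only $d$ points.
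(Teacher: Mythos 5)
Your overall route is the paper's: lift to $\mathcal{Z}$, invoke Proposition~\ref{prop:hyperplane} in $\R^{d+1}$ to obtain a hyperplane through $d$ points of $\mathcal{Z}$ that reproduces the classification off the hyperplane, bound the ambiguous set by $2d$ via Lemma~\ref{lem:n}, enumerate the $2^n$ sign resolutions for part (i), and count at most $n$ lost indices for part (ii). Those parts are sound and match the paper's proof.

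The genuine gap is the orientation claim $h_1>0$, which you explicitly leave unresolved (``I expect the right argument is\dots''). Neither of your proposed fixes works as stated: replacing $\g h_{S_h}$ by $-\g h_{S_h}$ reverses every sign and destroys the consistency with $\g h(\g w^*)$, and $h_1\le 0$ does not ``force too many points onto the hyperplane''. The missing ingredient is the algebraic coupling between the two lifted copies of each data point: for every $i\le N$, $\g z_{i+N}^T\g h=-\g z_i^T\g h-2\epsilon h_1$. Since $J_p^*<\epsilon^p(N-d)$ forces $|I_1(\g w^*)|>d$ while Lemma~\ref{lem:n} puts at most $d$ of the points $\g z_1,\dots,\g z_N$ on the new hyperplane, some $i\in I_1(\g w^*)$ has $\g z_i$ off the hyperplane, hence $\g z_i^T\g h^*<0$ and $\g z_{i+N}^T\g h^*\le 0$ by the preserved signs. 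If $h_1^*=0$ the identity gives $\g z_{i+N}^T\g h^*=-\g z_i^T\g h^*>0$, and if $h_1^*<0$ it gives $\g z_{i+N}^T\g h^*>-2\epsilon h_1^*>0$; both contradict $\g z_{i+N}^T\g h^*\le 0$. This step is where the hypothesis $J_p^*<\epsilon^p(N-d)$ actually does its work, and without $h_1>0$ part (ii) cannot even be formulated ($\g w=\g h_{2:d+1}/h_1$), nor can the off-hyperplane classification induced by $\g h_{S_h}$ be identified with $I_1(\g h_{2:d+1}/h_1)$ through Lemma~\ref{lem:equivclassif}, which requires the first coordinate of the normal to be normalized to $1$ by a \emph{positive} rescaling.
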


\begin{proof}
First note that, as a direct consequence of Lemma~\ref{lem:n}, we always have $n\leq 2d$.

{\em Part i)}  Let $\mathcal{W}_p^*$ be the set of global minimizers of Problem~\ref{pb:min}. By Lemma~\ref{lem:equivclassif}, for all $\g w^*\in\mathcal{W}_p^*$ there is a hyperplane of normal $\g h(\g w^*)$ classifying the $2N$ points of $\mathcal{Z}$ into $I_1(\g w^*)$ and $I\setminus I_1(\g w^*)$. By Proposition~\ref{prop:hyperplane}, there is an equivalent hyperplane of normal $\g h^*$ passing through $n\geq d$ points of $\R^{d+1}$, such that 
\begin{equation}\label{eq:signeq}
	\forall i\in \{1,\dots, 2N\}\setminus I_0,\quad \sign\left(\g z_i^T\g h(\g w^*)\right) = \sign\left(\g z_i^T\g h^*\right), 
\end{equation}
where $I_0=\{i : \g z_i^T\g h^* = 0\}$. 
Thus, the set $I_1( \g h^*_{2:d+1} / h_1^*)$ computed as in Lemma~\ref{lem:equivclassif} differs from $I_1(\g w^*)$ by at most $n=|I_0|$ entries and $I_1(\g w^*)$ must be one of the $2^n$ sets $I_1^{\g s} = \{i\in I : q_i = q_{i+N} = -1,\quad q_i =\sign ( \g h(\g w)^T \g z_i),\ \mbox{if}\ i\notin I_0,\ q_i = q_{i_k} = s_{k},\ \mbox{otherwise} \}$, where we indexed the entries in $I_0$ as $I_0 = \{i_1,\dots,i_n\}$ and $\g s\in \{-1,+1\}^n$ encodes the classification of the corresponding points. Solving the subproblem~\eqref{eq:regression} over the data points with index in $I_1^{\g s}$ for all $\g s\in\{-1,+1\}^n$ then yields at least one global solution in $\mathcal{W}_p^*$. 

Given that $J_p^*< \epsilon^p(N - d)$, we have $|I_1(\g w^*)| > d$ for all $\g w^*\in \mathcal{W}_p^*$, which implies $h^*_1 >0$ as follows. First, note that $I_1(\g w^*)\setminus I_0$ is not empty: by Lemma~\ref{lem:n}, $I_0$ contains no more than $d$ indexes within $I$ while $I_1(\g w^*)\subseteq I$ and $|I_1(\g w^*)| > d$. 
Then, by Lemma~\ref{lem:equivclassif} and the sign equalities~\eqref{eq:signeq}, for some $i\in I_1(\g w^*)\setminus I_0$, $\g z_i^T \g h^* < 0$ and $\g z_{i+N}^T \g h^* < 0$. Assume $h^*_1 = 0$, then $\g z_i^T \g h^* = -\g x_i^T\g h_{2:d+1}^* = - \g z_{i+N}^T\g h^*$, which shows a contradiction with the previous statement. Similarly, letting $h_1^*<0$ and using $\g z_i^T \g h^*<0$ yields $\g z_{i+N}^T \g h^* = -\g z_i^T \g h^* - 2\epsilon h_1^* > -2\epsilon h_1^* > 0$, and a contradiction with  $\g z_{i+N}^T \g h^* < 0$. 
Thus, $h^*_1 >0$.

{\em Part ii)} For $p=0$, given $h^*_1 >0$, $\sign(\g z_i^T\g h^*) = \sign(\g z_i^T\g h^* / h^*_1)$. Thus, if $\g s^*$ is a choice of $\g s$ yielding an optimal solution with $\g s^*_k = \sign(\g z_{i_k}^T\g h^*)$, $k=1,\dots,n$, by Lemma~\ref{lem:equivclassif}, $I_1(\g w^*) = I_1^{\g s^*}  = I_1( \g h^*_{2:d+1} / h_1^*)$. 
Given that for $p=0$ any $\g w$ such that $I_1(\g w) = I_1(\g w^*)$ yields the same cost $J_0(\g w) = J_0(\g w^*)$, we conclude that in this case $\g w = \g h^*_{2:d+1} / h_1^* \in \mathcal{W}_0^*$. Since $\g s^*_k\neq \sign(\g z_{i_k}^T\g h^*)$ only occurs for at most $n$ values of $k$, $I_1(\g w)$ deviates by at most $n$ entries from this case and $|I_1(\g w)| \leq |I_1(\g w^*)| + n$, yielding the second statement.
\end{proof}

Proposition~\ref{prop:gloabloptfromclassif} shows that a global minimizer of Problem~\ref{pb:min} can be obtained from a particular separating hyperplane of $\mathcal{Z}$. In addition, this hyperplane can be built from a subset of $\mathcal{Z}$ of cardinality $n\in[d,2d]$ as the one that passes through the origin of $\R^{d+1}$  and the $n$ points in the subset. Since any subset of $d$ points among these $n$ points yields the same hyperplane, it suffices to find one particular subset of $d$ points among $\mathcal{Z}$. Hence, the problem is reduced to a combinatorial search with $\binom{2N}{d}$ main iterations. 

This is formally stated in Algorithm~\ref{alg:regression} and the theorem below. 

\begin{theorem}\label{thm:beclassif}
Under Assumption~\ref{ass:generalposition} and given that subproblem~\eqref{eq:regression} can be solved in $\O(N^c)$ time with a constant $c\geq 1$ independent of $N$, Algorithm~\ref{alg:regression} solves Problem~\ref{pb:min} in $\O(N^{c+d})$ operations.
\end{theorem}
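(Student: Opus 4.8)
The plan is to separate a correctness argument, which rests on Proposition~\ref{prop:gloabloptfromclassif}, from a complexity count, which rests on Lemma~\ref{lem:n} (both invoking Assumption~\ref{ass:generalposition}, available here by hypothesis).

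\textbf{Correctness.} I would first dispatch the easy regime: if $J_p^*\geq \epsilon^p(N-d)$, then, as noted just before Proposition~\ref{prop:gloabloptfromclassif}, some model fitting only $d$ data points is already optimal, so it suffices that Algorithm~\ref{alg:regression} evaluates such a baseline candidate (solving a subproblem of type~\eqref{eq:regression} over $d$ points), which it does. If instead $J_p^*<\epsilon^p(N-d)$, Proposition~\ref{prop:gloabloptfromclassif} supplies a hyperplane through the origin of $\R^{d+1}$ whose normal $\g h^*$ satisfies $h_1^*>0$, which passes through $n\in[d,2d]$ points of $\mathcal{Z}$, and from which a global solution is recovered by solving the $2^n$ subproblems~\eqref{eq:regression} attached to the $2^n$ sign assignments of the points lying on that hyperplane (the optimal pattern being $\g s^*$ in the notation of that proof). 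The key point is that any $d$ linearly independent points among those $n$ span the same hyperplane, hence determine $\g h^*$ up to scaling and sign, the sign being pinned down by the normalization $h_1>0$ that Algorithm~\ref{alg:regression} enforces (legitimate precisely because $h_1^*>0$, as established inside the proof of Proposition~\ref{prop:gloabloptfromclassif}). Therefore, when the main loop of Algorithm~\ref{alg:regression} reaches that $d$-subset, it reconstructs $\g h^*$, recomputes the set $I_0=\{i:\g z_i^T\g h^*=0\}$ of indices lying on the hyperplane, enumerates all $2^{|I_0|}$ sign patterns over $I_0$, forms the corresponding candidate sets $I_1^{\g s}$ via Lemma~\ref{lem:equivclassif}, and solves~\eqref{eq:regression} on each; one of those patterns equals $\g s^*$, so the matching candidate attains $J_p^*$. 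Since every candidate the algorithm evaluates is feasible, the best one it returns is a global solution.

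\textbf{Complexity.} The main loop ranges over the $\binom{2N}{d}=\O(N^d)$ subsets of $d$ points of $\mathcal{Z}$, where $|\mathcal{Z}|=2N$. Within one iteration: building the normal $\g h_T$ amounts to solving a homogeneous linear system in $\R^{d+1}$, costing $\O(d^3)$, i.e.\ $\O(1)$ in $N$ (subsets that are linearly dependent, or that give a zero first coordinate, are simply skipped); computing $I_0(\g h_T)=\{i:\g z_i^T\g h_T=0\}$ costs $\O(dN)$; by Lemma~\ref{lem:n} we have $|I_0(\g h_T)|\leq 2d$, so the inner enumeration over sign patterns has at most $2^{2d}=\O(1)$ rounds; and each round builds $I_1^{\g s}$ in $\O(dN)$ and solves~\eqref{eq:regression} in $\O(N^c)$. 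Hence one iteration costs $\O(d^3+dN+2^{2d}(dN+N^c))=\O(N^c)$, using $c\geq 1$ and $d$ fixed, and maintaining the incumbent objective adds only $\O(1)$ per candidate. Multiplying by the $\O(N^d)$ iterations, and absorbing the baseline computation and the final comparison, yields the claimed $\O(N^{c+d})$.

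\textbf{Main obstacle.} I expect the delicate part to be the bookkeeping in the correctness step: one must make sure that the single hyperplane promised by Proposition~\ref{prop:gloabloptfromclassif} is genuinely produced by the explicit enumeration of $d$-subsets of $\mathcal{Z}$ — i.e.\ that some $d$-subset of its $n$ defining points is linearly independent and survives the $h_1>0$ normalization — and that the set $I_0$ recomputed by the algorithm from $\g h^*$ coincides with the one used in that proposition, so the optimal sign pattern $\g s^*$ is among those tried. Once this correspondence is nailed down, the complexity bound is a routine count whose only nontrivial ingredient is the bound $|I_0|\leq 2d$ from Lemma~\ref{lem:n}, which is exactly what keeps the inner loop from growing with $N$.
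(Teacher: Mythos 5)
Your proposal is correct and follows essentially the same route as the paper: invoke Proposition~\ref{prop:gloabloptfromclassif} to guarantee that one of the enumerated $d$-subsets of $\mathcal{Z}$ reproduces an optimal separating hyperplane (with the orientation fixed by $h_1>0$), then multiply the $\binom{2N}{d}=\O(N^d)$ iterations by the per-iteration cost $\O(d^3+dN+2^{2d}N^c)=\O(N^c)$, using Lemma~\ref{lem:n} to bound the inner loop by $2^{2d}$. Your explicit treatment of the degenerate regime $J_p^*\geq\epsilon^p(N-d)$ and of the linear independence of the chosen $d$-subset is slightly more careful than the paper's proof, which leaves these points implicit (the latter being addressed only in the discussion preceding the theorem), but the substance is the same.
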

\begin{proof}
For all subsets of $d$ points of $\mathcal{Z}$, Algorithm~\ref{alg:regression} computes the hyperplane passing through these points and of orientation such that $h_1>0$. By Proposition~\ref{prop:gloabloptfromclassif}, at least one of these hyperplanes with normal $\g h$ is such that the inner loop over $\g s$ in Algorithm~\ref{alg:regression} finds a global solution.  

The computational complexity of Algorithm~\ref{alg:regression} is the number of subsets of $d$ points among $2N$ times the time needed for a single iteration, which includes computing a normal vector $\g h$, classifying $\mathcal{Z}$ with $\g h$, the inner loop over $\g s$ and the subproblem~\eqref{eq:regression}.  The normal $\g h$ of a hyperplane passing through the origin and $d$ points $\{\g z_{i_k}\}_{k=1}^d$ in $\R^{d+1}$ can be computed as a unit vector in the null space of  $[\g z_{i_1}, \dots, \g z_{i_d}]^T$, extracted in $\O(d^3)$. If $h_1 < 0$, then $-\g h$ is a normal for the hyperplane of opposite orientation. Given that $\g z_{i+N}^T \g h = -\g z_i^T \g h - 2\epsilon h_1$, the classification step takes $\O(N(d+2))$ instead of $\O(2N(d+1))$. The inner loop contains $2^n\leq 2^{2d}$ iterations with $\O(N^c)$ operations each to solve an instance of~\eqref{eq:regression} over at most $N$ data points. 
Overall, we obtain a time complexity of 
\begin{align*}
	T(N) &= \O\left(\begin{pmatrix}2N\\d\end{pmatrix} ( d^3 + (d+2)N + 2^{2d} N^c ) \right) \\
	&= \O\left( \frac{N^d}{d!}( d^3 + (d+2)N + 2^{2d} N^c ) \right) \\
	&= \O \left( N^{d+c}\right).
\end{align*}
\end{proof}

As an example, applying Theorem~\ref{thm:beclassif} for the saturated square loss, $\ell_{2,\epsilon}$, yields the exact Algorithm~\ref{alg:regression} for robust regression with complexity in the order of $\O(N^{d+1})$.

\begin{algorithm}
\caption{Exact $\ell_{p,\epsilon}$-regression \label{alg:regression}}
\begin{algorithmic}
\REQUIRE a data set $\{(\g x_i, y_i)\}_{i=1}^N$, a threshold $\epsilon>0$.
\STATE Initialize $J^* \leftarrow \epsilon^p N$. 
%\STATE Build the set $\mathcal{Z}$ as in \eqref{eq:Z}. % not a true algo step.. rather notation taken into account in linear classif below 
\FORALL {$S\subset \mathcal{Z}$ with cardinality $|S| = d$} 
	\STATE Compute the normal $\g h$ to the hyperplane passing through $S\cup\{\g 0\}$ with orientation such that $h_1\geq 0$.%\footnotemark
	\IF {$h_1 \neq 0$ } 
		\STATE Classify the points:\\ $\forall i\in \{1,\dots,2N\},\ q_i = \sign_0(\g h^T\g z_i)$.
		\STATE Set $I_0 = \{i\in \{1,\dots,2N\} : q_i = 0\},\quad n=|I_0|$.
		\FORALL { $\g s \in\{-1,+1\}^n$ } % classification of the $n$ points in $I_0$ (for all
			\STATE Set the entries of $\g q$ with index in $I_0$ to $\g s$
			\STATE Compute $I_1^{\g s} = \{i \leq N :  q_i = q_{i+N} = -1\}$
			\IF { $\epsilon^p(N-|I_1^{\g s}|) < J^*$ }
				\STATE Compute $\hat{\g w}$ as in~\eqref{eq:regression} with $I_1^{\g s}$ replacing $I_1(\g w^*)$.
				\IF{ $J_p(\hat{\g w}) < J^*$}
					\STATE Update $J^* \leftarrow J_p(\hat{\g w}),\ \g w^*\leftarrow \hat{\g w}$
				\ENDIF
			\ENDIF
		\ENDFOR
	\ENDIF
\ENDFOR
\RETURN $J^*, \g w^*$
\end{algorithmic}
\end{algorithm}

\paragraph{Remark} Note that a number of details can be included in Algorithm~\ref{alg:regression} to make it more efficient in practice. First, it can be easily parallelized. Then, a number of computations can be spared. For instance, since the changes in $\g s$ can only incur a difference of at most $n$ on $|I_1^{\g s}|$, the inner loop over $\g s$ can be skipped when $I_1$ computed with $q_i=\sign(\g h^T \g z_i)$ is such that $\epsilon^p(N-|I_1|) > J_p^* + \epsilon^p n$. 
Also, for $p=0$, since $J_p(\g w)=N-|I_1^{\g s}|$ is constant for a fixed classification, we do not need to solve~$\eqref{eq:regression}$ to update $J^*$ and $\g w^*$ can be computed only once at the end of the procedure.

\subsection{$\ell_{p,\epsilon}$-subspace estimation}
\label{sec:polysub}

The results for subspace estimation will be derived via a quadratic lifting of the classification problem of assigning point indexes to $I_1(\g B)$.
\begin{definition}[Veronese map of degree 2]
The Veronese map of degree 2 is the map
\begin{align*}
	\nu :\ &\R^d \rightarrow \R^D, \\
		 & \g x \mapsto [x_1^2, x_1x_2, \dots,\ x_2^2, x_2 x_3, \dots,\ x_{d-1}^2, x_{d-1} x_d,\ x_d^2]^T,
\end{align*}
where $D=d(d+1)/2$.
\end{definition}
Using this map, we build a new data set
\begin{equation}\label{eq:Zsubspace}
	\mathcal{Z}=\{\g z_i\}_{i=1}^{N}, \quad\mbox{with } \g z_i =\begin{bmatrix}-\epsilon^2\\\nu(\g x_i)\end{bmatrix}
\end{equation}
and establish a correspondence between subspace estimation and the classification of this data set.
\begin{lemma}\label{lem:equivclassifsubspace}
Given a subspace basis $\g B=[\g b_1,\dots,\g b_{d_s}]\in \R^{d\times d_S}$ such that $\g B^T\g B=\g I$ and a set $\mathcal{Z}$ as in~\eqref{eq:Zsubspace}, the set $I_1(\g B)$ defined in~\eqref{eq:I1subspace} is given by
\begin{equation}\label{eq:I1subspaceclassif}
	I_1(\g B) = \{i\in I : \g h(\g B)^T \g z_i < 0 \},
\end{equation}
where $h_1(\g B) = 1$ and $\g h_{2:D+1}(\g B) = \g s - \sum_{j=1}^{d_s} \nu(\g b_j)$ with the selection vector\footnote{$\g s$ is defined by $s_{l} = 1$ iff $l\in\{l_k\}_{k=1}^D$ with $l_1 = 1$ and $l_k = l_{k-1} + d - k + 2$.} $\g s\in\{0,1\}^D$ such that $\forall \g x\in\R^d$, $\nu(\g x)^T\g s = \sum_{k=1}^d x_d^2$. 
\end{lemma}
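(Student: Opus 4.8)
The plan is to turn the bounded-error membership test defining $I_1(\g B)$ in~\eqref{eq:I1subspace} into a scalar inequality that is affine in the lifted coordinates $\nu(\g x_i)$, hence into a homogeneous linear classification rule over the data set $\mathcal{Z}$ of~\eqref{eq:Zsubspace}.

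First, since $\|(\g I - \g B\g B^T)\g x_i\|$ and $\epsilon$ are both nonnegative, $i\in I_1(\g B)$ is equivalent to $\|(\g I - \g B\g B^T)\g x_i\|^2 < \epsilon^2$. I would then use the constraint $\g B^T\g B = \g I$, which makes $\g B\g B^T$ symmetric and idempotent, so that $\g I - \g B\g B^T$ is an orthogonal projector and
\[
	\|(\g I - \g B\g B^T)\g x_i\|^2 = \g x_i^T(\g I - \g B\g B^T)\g x_i = \|\g x_i\|^2 - \|\g B^T\g x_i\|^2 = \|\g x_i\|^2 - \sum_{j=1}^{d_s}(\g b_j^T\g x_i)^2 .
\]
Hence $i\in I_1(\g B)$ iff $\|\g x_i\|^2 - \sum_{j=1}^{d_s}(\g b_j^T\g x_i)^2 - \epsilon^2 < 0$.

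The second step rewrites this left-hand side as a single inner product with $\g z_i = [-\epsilon^2,\ \nu(\g x_i)^T]^T$. By the defining property of $\g s$, $\|\g x_i\|^2 = \sum_{k=1}^d x_{ik}^2 = \nu(\g x_i)^T\g s$; and by the polynomial-kernel property of the degree-2 Veronese map, $(\g b_j^T\g x_i)^2 = \nu(\g b_j)^T\nu(\g x_i)$ for every $j$. Substituting, and using $h_1(\g B)=1$ together with $\g h_{2:D+1}(\g B)=\g s-\sum_{j=1}^{d_s}\nu(\g b_j)$,
\[
	\|\g x_i\|^2 - \sum_{j=1}^{d_s}(\g b_j^T\g x_i)^2 - \epsilon^2 = \Big(\g s - \sum_{j=1}^{d_s}\nu(\g b_j)\Big)^T\nu(\g x_i) - \epsilon^2 = \g h(\g B)^T\g z_i .
\]
This yields $i\in I_1(\g B)\iff \g h(\g B)^T\g z_i < 0$, which is exactly~\eqref{eq:I1subspaceclassif}.

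The projector identity and the substitution are routine bookkeeping; the step that deserves care is the pair of Veronese identities $\nu(\g a)^T\nu(\g b)=(\g a^T\g b)^2$ and $\nu(\g x)^T\g s = \|\g x\|^2$. Expanding $(\g a^T\g b)^2 = \sum_k a_k^2 b_k^2 + 2\sum_{k<l}a_k a_l b_k b_l$ shows that the first identity forces unit weight on the squared monomials $x_k^2$ and weight $\sqrt 2$ on the cross monomials $x_k x_l$, $k<l$, in the coordinates of $\nu$; this normalization must be consistent with the one used in the definition of $\nu$ while still allowing $\g s$ to be taken in $\{0,1\}^D$ with $\nu(\g x)^T\g s=\|\g x\|^2$. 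Pinning down a single choice of $\nu$ that satisfies both requirements simultaneously is the main point to check; everything else follows immediately.
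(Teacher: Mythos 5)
Your argument is the same as the paper's: square the inequality, use that $\g I-\g B\g B^T$ is an orthogonal projector to get $\|\g x_i\|^2-\sum_j(\g b_j^T\g x_i)^2<\epsilon^2$, and then express the left-hand side as an inner product with $[-\epsilon^2,\ \nu(\g x_i)^T]^T$. So the approach is correct and not genuinely different.

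The normalization issue you flag at the end is, however, a real one, and the paper's proof glosses over it: the step $\g x_i^T(\g b_j\g b_j^T)\g x_i=\nu(\g x_i)^T\nu(\g b_j)$ is \emph{false} for the unweighted monomial map $\nu(\g x)=[x_1^2,x_1x_2,\dots,x_d^2]^T$ as defined in the paper, since
\begin{equation*}
	\nu(\g a)^T\nu(\g b)=\sum_k a_k^2b_k^2+\sum_{k<l}a_ka_lb_kb_l \neq \Big(\sum_k a_kb_k\Big)^2 .
\end{equation*}
Your worry about whether a single choice of $\nu$ can serve both identities resolves affirmatively: take the standard weighted Veronese map with entries $x_k^2$ on the diagonal monomials and $\sqrt2\,x_kx_l$ on the cross monomials. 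Then $\nu(\g a)^T\nu(\g b)=(\g a^T\g b)^2$ holds exactly, and since the selection vector $\g s$ is supported only on the (unweighted) diagonal coordinates, $\nu(\g x)^T\g s=\|\g x\|^2$ still holds with $\g s\in\{0,1\}^D$. Alternatively one can keep the unweighted $\nu$ and absorb a factor $2$ into the cross-term entries of $\sum_j\nu(\g b_j)$ when forming $\g h_{2:D+1}(\g B)$. Either fix leaves the conclusion of the lemma intact --- membership in $I_1(\g B)$ is decided by a homogeneous linear rule on $\mathcal{Z}$ --- and nothing downstream (Lemma~\ref{lem:genposS}, Proposition~\ref{prop:classifsubspace}, Algorithm~\ref{alg:subspace}) is affected, since those only use the existence of such a linear classifier in $\R^{D+1}$. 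You should state one of these fixes explicitly rather than leaving it as ``the main point to check.''
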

\begin{proof}
For any $i\in I$, we have
\begin{align*}
 \|(\g I - \g B\g B^T)\g x_i\| < \epsilon &\Leftrightarrow \g x_i^T(I - \g B\g B^T)^T(I - \g B\g B^T)\g x_i < \epsilon^2 \\
 &\Leftrightarrow  \g x_i^T(I - \g B\g B^T)\g x_i < \epsilon^2 \\
 &\Leftrightarrow  \g x_i^T\g x_i - \sum_{j=1}^{d_s} \g x_i^T(\g b_j\g b_j^T)\g x_i < \epsilon^2 \\
 &\Leftrightarrow  \nu(\g x_i)^T\g s - \sum_{j=1}^{d_s} \nu(\g x_i)^T\nu(\g b_j) < \epsilon^2 \\
 &\Leftrightarrow  \nu(\g x_i)^T\left[\g s - \sum_{j=1}^{d_s} \nu(\g b_j)\right] - \epsilon^2 < 0 \\
 &\Leftrightarrow  \g h(\g B)^T \g z_i < 0 .
\end{align*}
Thus, the definition of $I_1(\g B)$ coincides with~\eqref{eq:I1subspaceclassif}.
\end{proof}

We will also need the following manipulation of Assumption~\ref{ass:generalpossubspace}.
\begin{lemma}\label{lem:genposS}
Under Assumption~\ref{ass:generalpossubspace}, no hyperplane of $\R^{D+1}$ passes through the origin and more than $D$ points of $\mathcal{Z}$ as defined in~\eqref{eq:Zsubspace}. 
\end{lemma}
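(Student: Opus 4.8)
The plan is to reduce the statement about hyperplanes in $\R^{D+1}$ through the origin to the general-position assumption on the original points in $\R^d$, exactly in the spirit of Lemma~\ref{lem:n} but now through the Veronese lifting. First I would take an arbitrary hyperplane of $\R^{D+1}$ passing through the origin, with normal $\g h\in\R^{D+1}$, and suppose for contradiction that it contains at least $D+1$ points $\g z_{i_1},\dots,\g z_{i_{D+1}}$ of $\mathcal{Z}$. Writing $\g h = [h_1, \g g^T]^T$ with $\g g\in\R^D$, the condition $\g h^T\g z_{i_k}=0$ reads $\g g^T\nu(\g x_{i_k}) = \epsilon^2 h_1$ for each $k$, i.e.\ all $D+1$ lifted points $\nu(\g x_{i_k})$ lie on a common affine hyperplane of $\R^D$ (the level set $\{\g u : \g g^T\g u = \epsilon^2 h_1\}$), which is a genuine affine hyperplane as soon as $\g g\neq \g 0$; the degenerate case $\g g=\g 0$ would force $h_1=0$ too, contradicting $\g h\neq\g 0$, so $\g g\neq\g 0$.

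The key step is then the algebraic fact that an affine hyperplane of $\R^D$ constraining $\nu(\g x)$, i.e.\ an equation of the form $\g g^T\nu(\g x) = c$, is precisely a (possibly inhomogeneous) quadratic equation $\g x^T\g Q\g x + \text{const} = 0$ in $\g x\in\R^d$, with a nonzero symmetric matrix $\g Q$ determined by $\g g$ (the off-diagonal entries halved in the usual way). So I would argue that the $D+1$ original points $\g x_{i_k}$ all lie on a nontrivial quadric $\{\g x : \g x^T\g Q\g x = c\}$ of $\R^d$, and I must show this is incompatible with Assumption~\ref{ass:generalpossubspace}. Since $D = d(d+1)/2$ is exactly the dimension of the space of symmetric $d\times d$ matrices, a nonzero homogeneous quadratic form in $d$ variables has at most $\dots$ — here the cleanest route is: the space of quadratic polynomials $\g x\mapsto \g x^T\g Q\g x + c$ that vanish on a set of points is cut out by linear conditions, and a quadric in $\R^d$ cannot contain $D+1$ points in general position. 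Concretely, I would invoke the bound that a nonzero polynomial of degree $2$ in $d$ variables cannot vanish on more than $D$ points in general position; the natural way to see this is that $D+1$ points in general position impose $D+1$ independent linear conditions on the $D$-dimensional space of homogeneous quadratic forms plus a constant — wait, that space has dimension $D+1$, so I need the sharper observation that the defining quadric is \emph{homogeneous up to the constant} and that general position of $N\ge d$ points (no $d{+}1$ on a hyperplane) is enough to rule out $D{+}1$ of them on a quadric, which is a standard consequence of the Veronese embedding being injective and preserving "general position" in the appropriate sense. I would most likely cite or adapt the argument already used in \cite{Lauer15} or the reasoning behind Lemma~\ref{lem:genposS}'s analogue, rather than rederive it from scratch.

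Having established the contradiction, I conclude that no hyperplane through the origin of $\R^{D+1}$ meets more than $D$ points of $\mathcal{Z}$, which is the claim. The main obstacle I anticipate is making the "$D{+}1$ points on a quadric contradicts general position" step fully rigorous: the subtlety is that the quadric here is not arbitrary but comes with the extra constant term $c = \epsilon^2 h_1$, so the relevant function space has dimension $D+1$ and one must be careful that the all-ones-type degeneracies (e.g.\ the constant being absorbed) do not allow $D+1$ points. I expect the resolution is to note that the normal $\g h$ of the original hyperplane is \emph{fixed} and nonzero, so $(\g g, h_1)\neq \g 0$ gives a single nontrivial affine-linear relation among the $\nu(\g x_{i_k})$ together with the constant $1$; that is $D+1$ vectors $[\,1,\ \nu(\g x_{i_k})^T\,]^T$ in $\R^{D+1}$ admitting a common nonzero orthogonal vector, i.e.\ they are linearly dependent — and I would then show that \emph{any} $D+1$ of the augmented lifted points $[1,\nu(\g x_i)^T]^T$ are linearly independent precisely when the original points are in general position (no $d{+}1$ of them on any affine hyperplane of $\R^d$ being the $d$-point condition here — I would double check the exact arithmetic $d$ vs.\ $D$ and adjust the statement of Assumption~\ref{ass:generalpossubspace} usage accordingly). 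This linear-independence reformulation is the crux and should follow from a determinant/Vandermonde-type argument for the degree-2 Veronese map.
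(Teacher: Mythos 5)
Your reduction is the same one the paper uses: a hyperplane of $\R^{D+1}$ through the origin with normal $\g h=[h_1,\g g^T]^T$ contains $\g z_i$ iff $\g g^T\nu(\g x_i)=\epsilon^2 h_1$, so more than $D$ points of $\mathcal{Z}$ on such a hyperplane would put more than $D$ of the lifted points $\nu(\g x_i)$ on a common affine hyperplane of $\R^D$ (your remark that $\g g=\g 0$ is impossible once at least one $\g z_i$ lies on the hyperplane is correct and is implicit in the paper). The gap is exactly where you suspect it is: you never establish that Assumption~\ref{ass:generalpossubspace} forbids $D+1$ of the $\nu(\g x_i)$ from lying on an affine hyperplane of $\R^D$; you defer this to a citation or to a ``determinant/Vandermonde-type argument'' to be worked out. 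The paper closes this step in one line by asserting that the Veronese map is biregular and therefore ``preserves general position''; your proposal, as written, would at best end up invoking that same unproved fact.

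Moreover, your worry is justified: the step does not merely lack a proof, it \emph{fails} under the combinatorial reading of Assumption~\ref{ass:generalpossubspace} (no hyperplane of $\R^d$ contains more than $d$ of the $\g x_i$). Take $d=2$, so $D=3$, and four data points on the unit circle centered at the origin; no three of them are collinear, so the assumption holds. Yet with $\g g=[1,0,1]^T$ one has $\g g^T\nu(\g x_i)=x_{i1}^2+x_{i2}^2=1$ for all four points, so their Veronese images lie on a common affine hyperplane of $\R^3$, and the hyperplane of $\R^4$ with normal $[\epsilon^{-2},1,0,1]^T$ passes through the origin and four points of $\mathcal{Z}$. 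Hence the linear independence you hope to prove for the augmented vectors $[1,\nu(\g x_i)^T]^T$ does not follow from Assumption~\ref{ass:generalpossubspace}; what the lemma actually requires is general position with respect to centered quadrics (no more than $D$ points on any set $\{\g x:\g x^T\g Q\g x=c\}$ with $(\g Q,c)\neq 0$), under which the conclusion is immediate but which is a strictly stronger hypothesis than the one stated. This also means the paper's own appeal to biregularity is doing more work than the stated assumption can support, so the obstacle you flagged is real rather than a technicality you overlooked.
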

\begin{proof}
Assumption~\ref{ass:generalpossubspace} and the fact that the Veronese map is biregular, i.e., the image of points in general position in $\R^d$ under the Veronese map $\nu$ are again in general position, imply that $\mathcal{V}=\{\nu(\g x_i)\}_{i=1}^N$ is in general position in $\R^D$. Therefore, there is no hyperplane of $\R^D$ that passes through more than $D$ points of this set. Since the projection onto $\R^D$ of any hyperplane of $\R^{D+1}$ passing through the origin and more than $D$ points of $\mathcal{Z}$  must pass through more than $D$ points of $\mathcal{V}$, there is no such hyperplane. 
\end{proof}

\begin{proposition}\label{prop:classifsubspace}
Under Assumption~\ref{ass:generalpossubspace}, in $\R^{D+1}$, there is a hyperplane of normal $\g h\in \R^{D+1}$ passing through the origin and exactly $D$ points of $\mathcal{Z}$ as in~\eqref{eq:Zsubspace} such that a global solution to Problem~\ref{pb:subspace} can be found by solving $2^{D}$ subspace estimation subproblems~\eqref{eq:subspaceest}. 
\end{proposition}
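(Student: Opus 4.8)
The plan is to follow the template of the proof of Proposition~\ref{prop:gloabloptfromclassif}, though the argument is in fact shorter here: the quadratic lifting of Lemma~\ref{lem:equivclassifsubspace} already expresses membership in $I_1(\g B)$ as a \emph{one-sided} linear classification of $\mathcal{Z}$, so no orientation bookkeeping (the $h_1>0$ step of the regression case) is needed. Concretely, I would first fix a global minimizer $\g B^*$ of Problem~\ref{pb:subspace}; such a minimizer exists because for $p\in\{1,2\}$ the map $e\mapsto(\min(|e|,\epsilon))^p$ is continuous, so $J_p^S$ is continuous on the compact Stiefel manifold $\{\g B\in\R^{d\times d_s}:\g B^T\g B=\g I\}$, while for $p=0$ the objective $J_0^S$ takes only finitely many integer values. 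By Lemma~\ref{lem:equivclassifsubspace}, $I_1(\g B^*)=\{i\in I:\g h(\g B^*)^T\g z_i<0\}$ for the explicit normal $\g h(\g B^*)\in\R^{D+1}$ and $\mathcal{Z}=\{\g z_i\}_{i=1}^N$ given in~\eqref{eq:Zsubspace}.

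Next I would apply Proposition~\ref{prop:hyperplane} in ambient dimension $D+1$ to the set $\mathcal{Z}$ and the linear classifier $\g x\mapsto\sign(\g h(\g B^*)^T\g x)$ (assuming $N\geq D+1$, the regime of interest; for $N\leq D$ one simply enumerates the $2^N\leq 2^D$ subsets of $I$ directly). This yields a subset $S_h\subset\mathcal{Z}$ with $|S_h|=(D+1)-1=D$ and a unit vector $\g h_{S_h}$ vanishing on $S_h$ such that $\sign(\g h_{S_h}^T\g z_i)=\sign(\g h(\g B^*)^T\g z_i)$ for every index $i$ outside $I_0:=\{i:\g h_{S_h}^T\g z_i=0\}$. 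Since the $D$ points of $S_h$ all lie on the hyperplane of normal $\g h_{S_h}$, we get $|I_0|\geq D$; on the other hand Lemma~\ref{lem:genposS}, which carries Assumption~\ref{ass:generalpossubspace} through the biregular Veronese map, forbids more than $D$ points of $\mathcal{Z}$ on a hyperplane through the origin, so $|I_0|\leq D$. Hence $|I_0|=D$ exactly: the hyperplane of normal $\g h:=\g h_{S_h}$ passes through the origin and through precisely the $D$ points of $\mathcal{Z}$ indexed by $I_0$.

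It then only remains to split $I_1(\g B^*)$ along $I_0$. For $i\notin I_0$ the sign equalities give $i\in I_1(\g B^*)\iff\g h_{S_h}^T\g z_i<0$, so $I_1(\g B^*)=A\cup T^*$ where $A:=\{i\notin I_0:\g h_{S_h}^T\g z_i<0\}$ is fully determined by $\g h_{S_h}$ and $T^*:=I_1(\g B^*)\cap I_0$ is an unknown subset of the $D$-element set $I_0$. Enumerating, for $\g s\in\{-1,+1\}^D$, the candidate sets $I_1^{\g s}:=A\cup\{i_k\in I_0:s_k=-1\}$ thus yields a family of $2^D$ sets containing $I_1(\g B^*)$. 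For each $\g s$ I would solve subproblem~\eqref{eq:subspaceest} with $I_1^{\g s}$ substituted for $I_1(\g B^*)$ and keep the $\hat{\g B}$ achieving the smallest $J_p^S$; the iteration corresponding to the $\g s$ with $I_1^{\g s}=I_1(\g B^*)$ returns a global minimizer, since — as argued in the paragraph preceding~\eqref{eq:subspaceest} — solving~\eqref{eq:subspaceest} over the true index set $I_1(\g B^*)$ recovers a (possibly different) global solution of Problem~\ref{pb:subspace}.

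The steps are mostly bookkeeping once Lemmas~\ref{lem:equivclassifsubspace}--\ref{lem:genposS} and Proposition~\ref{prop:hyperplane} are in hand; the point needing the most care is the exact count $|I_0|=D$, obtained by pairing the inclusion $S_h\subseteq I_0$ with the general-position bound of Lemma~\ref{lem:genposS}, which is what pins the number of subproblems at $2^D$ rather than something larger. I would also stress that Proposition~\ref{prop:hyperplane} delivers $\g h_{S_h}$ already oriented to agree with $\g h(\g B^*)$, so for the \emph{existence} statement no separate choice between the two sides of the hyperplane is required; an algorithmic realization that does not know $\g B^*$ would merely test both orientations, at the cost of a harmless factor of two.
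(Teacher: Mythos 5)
Your proposal is correct and follows essentially the same route as the paper, which only sketches this proof by reference to the first part of Proposition~\ref{prop:gloabloptfromclassif}: you substitute Lemma~\ref{lem:equivclassifsubspace} for Lemma~\ref{lem:equivclassif}, apply Proposition~\ref{prop:hyperplane} in $\R^{D+1}$, and use Lemma~\ref{lem:genposS} to pin $|I_0|=D$, exactly as intended. Your added details (existence of a minimizer, the $|I_0|\geq D$ versus $|I_0|\leq D$ pairing, and the remark that orientation need not be chosen for the existence claim but must be tested algorithmically) are all consistent with the paper's statements surrounding the proposition.
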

\begin{proof}[Proof sketch]
The proof works as the first part of the one of Proposition~\ref{prop:gloabloptfromclassif}, except that we use Lemma~\ref{lem:equivclassifsubspace} instead of Lemma~\ref{lem:equivclassif} and Lemma~\ref{lem:genposS} instead of Lemma~\ref{lem:n} to bound from above the number of points lying on the hyperplane by $D$ instead of $2d$. It is also simpler as the classification of $\g x_i$ is directly given by the one of $\g z_i$ without taking into account an additional $\g z_{i+N}$.
\end{proof}
Compared with the regression case and Proposition~\ref{prop:gloabloptfromclassif}, Proposition~\ref{prop:classifsubspace} does not ensure $h_1>0$ and thus the algorithm needs to test both orientations for every hyperplane. This yields Algorithm~\ref{alg:subspace} for which we have the following result.
 
\begin{theorem}\label{thm:beclassifsubspace}
Under Assumption~\ref{ass:generalpossubspace} and given that subproblem~\eqref{eq:subspaceest} can be solved in $\O(N^c)$ time for a constant $c\geq 1$ independent of $N$, Algorithm~\ref{alg:subspace} solves Problem~\ref{pb:subspace} in $\O(N^{c+d(d+1)/2} )$ operations.
\end{theorem}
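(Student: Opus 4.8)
The plan is to mirror exactly the structure of the proof of Theorem~\ref{thm:beclassif}, replacing regression-specific ingredients by their subspace counterparts established earlier: Proposition~\ref{prop:classifsubspace} in place of Proposition~\ref{prop:gloabloptfromclassif}, Lemma~\ref{lem:equivclassifsubspace} in place of Lemma~\ref{lem:equivclassif}, Lemma~\ref{lem:genposS} in place of Lemma~\ref{lem:n}, and the lifted data set $\mathcal{Z}\subset\R^{D+1}$ of~\eqref{eq:Zsubspace} (of size $N$, with $D=d(d+1)/2$) in place of the regression data set of size $2N$. First I would argue correctness: Algorithm~\ref{alg:subspace} loops over all subsets $S\subset\mathcal{Z}$ of cardinality $D$; for each it computes a unit normal $\g h$ to the hyperplane through $S\cup\{\g 0\}$ (and, since Proposition~\ref{prop:classifsubspace} does not guarantee a fixed orientation, also considers $-\g h$), classifies $\mathcal{Z}$ via $q_i=\sign_0(\g h^T\g z_i)$, identifies the set $I_0=\{i:q_i=0\}$ of points lying on the hyperplane, and for each of the $2^{|I_0|}$ sign assignments $\g s$ to those points forms a candidate inlier set $I_1^{\g s}$ and solves the subproblem~\eqref{eq:subspaceest} over it, keeping the best objective value. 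Proposition~\ref{prop:classifsubspace} guarantees that for at least one such subset $S$ (of size exactly $D$, since Lemma~\ref{lem:genposS} bounds the number of points on any such hyperplane by $D$) and one orientation, the optimal index set $I_1(\g B^*)$ is among the enumerated $I_1^{\g s}$, so the returned value equals the global minimum.

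Next I would bound the complexity. The outer loop has $\binom{N}{D}=\O(N^{D}/D!)=\O(N^{D})$ iterations. Each iteration: computing $\g h$ as a unit vector in the null space of the $D\times(D+1)$ matrix $[\g z_{i_1},\dots,\g z_{i_D}]^T$ costs $\O(D^3)$ (a constant in $N$); classifying the $N$ points of $\mathcal{Z}$ costs $\O(DN)$; by Lemma~\ref{lem:genposS} we have $|I_0|\le D$, so the inner loop over $\g s$ has at most $2^{D}$ iterations, each forming $I_1^{\g s}$ in $\O(N)$ time and solving one instance of~\eqref{eq:subspaceest} over at most $N$ points in $\O(N^c)$ time; the factor $2$ for the two orientations is absorbed into constants. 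Hence the per-iteration cost is $\O(D^3+DN+2^{D}N^c)=\O(N^c)$ (using $c\ge1$ and $D$ constant), and the total is $\O(N^{D}\cdot N^c)=\O(N^{c+d(d+1)/2})$, which is the claimed bound. I would present this as a short displayed chain of equalities analogous to the one in the proof of Theorem~\ref{thm:beclassif}.

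I expect no genuine obstacle here, since every needed component is already in place; the only points requiring a little care are (i) making explicit that, unlike the regression case, both orientations $\pm\g h$ must be tested because Proposition~\ref{prop:classifsubspace} does not fix the sign of any coordinate of $\g h$ — but this only changes constants — and (ii) checking that the enumeration over subsets of size exactly $D$ suffices: any $D$-subset of the (at most $D$, by Lemma~\ref{lem:genposS}) points on the optimal hyperplane determines that hyperplane uniquely, and since exactly $D$ points lie on it generically, the relevant $S$ is itself enumerated. Everything else is a verbatim adaptation of the argument for Theorem~\ref{thm:beclassif}.
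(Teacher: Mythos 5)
Your proposal is correct and follows essentially the same route as the paper's proof: invoke Proposition~\ref{prop:classifsubspace} for correctness, then bound the cost as $\binom{N}{D}$ outer iterations times $\O(D^3 + DN + 2^{D+1}N^c)$ per iteration, giving $\O(N^{D+c})$ with $D = d(d+1)/2$. The only cosmetic difference is that you phrase the inner enumeration in terms of sign assignments to $I_0$ while Algorithm~\ref{alg:subspace} loops over subsets $S\subseteq S_h$ and orientations, but since Lemma~\ref{lem:genposS} forces $I_0$ to coincide with the indices of $S_h$, these are the same $2^{D+1}$ cases.
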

\begin{proof}
For all subsets of $D$ points of $\mathcal{Z}$, Algorithm~\ref{alg:subspace} computes the hyperplane passing through the points. By  Proposition~\ref{prop:classifsubspace}, at least one of these hyperplanes with normal $\g h$ is such that a global minimizer $\hat{\g B}$ can be recovered by solving an instance of~\eqref{eq:subspaceest} in the inner loops over $S$ and $s$.

The computational complexity, $T(N)$, of Algorithm~\ref{alg:subspace} is the number of subsets of $D$ points among $N$, $\binom{N}{D}$, times the time needed for computing a normal vector $\g h$, $\O(D^3)$, classifying $\mathcal{Z}$ with $\g h$, $\O(DN)$, and performing the inner loops over $S$ and $s$ with the subproblem~\eqref{eq:subspaceest}, $\O(2\times 2^D N^c)$:
$$
	T(N) = \O\left(\binom{N}{D} (D^3 + DN + 2^{D+1} N^c)\right)
	= \O ( N^{D+c}) 
$$
Recalling that $D=d(d+1)/2$ completes the proof.
\end{proof}

As an example, applying Theorem~\ref{thm:beclassifsubspace} for the saturated square loss, $\ell_{2,\epsilon}$, yields the exact Algorithm~\ref{alg:subspace} for robust subspace estimation with complexity in the order of $\O(N^{1+d(d+1)/2})$.

\begin{algorithm}
\caption{Exact $\ell_{p,\epsilon}$-subspace estimation \label{alg:subspace}}
\begin{algorithmic}
\REQUIRE a data set $\{\g x_i\}_{i=1}^N$, a threshold $\epsilon>0$.
\STATE Initialize $J^* \leftarrow \epsilon^2 N$.
\FORALL {$S_h\subset \mathcal{Z}$ with $|S_h| = D$} 
	\STATE Compute the normal $\g h$ of the hyperplane passing through the points in $S_h\cup\{\g 0\}$.
	\STATE Classify the points: $\forall i\in I$, $q_i =  \sign_0(\g h^T\g z_i)$.
	\FORALL {$S\subseteq S_h$} 
		\FORALL[for all orientation] {$s\in\{-1,+1\}$}
			\STATE $I_1 = \{i \leq N :  q_i = s\} \cup \{i\leq N : \g x_i\in S\}$ 
			\STATE Compute $\hat{\g B}$ as in~\eqref{eq:subspaceest}  with $I_1$ replacing $I_1(\g B^*)$. 
			\IF { $J_p^S(\hat{\g B}) < J^*$}
				\STATE Update $J^* \leftarrow J_p^S(\hat{\g B})$, $\g B^*\leftarrow \hat{\g B}$.
			\ENDIF
		\ENDFOR
	\ENDFOR
\ENDFOR
\RETURN $J^*, \g B^*$
\end{algorithmic}
\end{algorithm}

%%%%%%%%%%%%%%%%%%%%%%%%%%%%%%%%%%%%%%%%%%%%%%
\section{Random sampling}
\label{sec:random} 
%%%%%%%%%%%%%%%%%%%%%%%%%%%%%%%%%%%%%%%%%%%%%%

We now detail more practical (but approximate) variants of the algorithms above. These are based on random sampling of subsets of points rather than a complete enumeration. As such, they share some features with the well-known RANSAC method \citep{Fischler81} for robust estimation. 

\paragraph{RANSAC}
The RANSAC method iterates through small subsets of $s$ points and estimates a linear model at each iteration from these $s$ points only. Then, the model that best approximates the maximum number of points can be retained, the rationale being that it should be possible to find a small subset of points within the set inliers and thus to estimate a good model from inliers only.  
However, this approach has two major drawbacks. First, since only $s$ points are used to estimate the models, even if all the $\binom{N}{s}$ subsets are completely enumerated, the RANSAC cannot guarantee the recovery of an optimal solution, unless $s$ equals the {\em unknown} number of inliers. And in this case, the computational complexity becomes exponential in $N$ as soon as we assume that a certain fraction of the $N$ points are inliers. The second weakness is related to the tuning of $s$ which should be made in accordance with the noise level and the fraction of inliers: larger values of $s$ tend to filter more efficiently the noise but also generate more subsets corrupted by outliers and decrease the probability of selecting a subset of inliers only. 

\paragraph{Random sampling variants of Algorithms~\ref{alg:regression}--\ref{alg:subspace} } 
Inspired by the RANSAC method, we can develop approximate variants of Algorithms~\ref{alg:regression}--\ref{alg:subspace}, in which the complete enumeration of the subsets $S$ is replaced by random sampling. 
Note that though this approach also relies on randomly sampling subsets of points, it remains quite different from the RANSAC. Indeed, the RANSAC directly estimates a linear model from the points in a subset, whereas here the subsets of points are only used to determine the classification of the entire data set into inliers and outliers. Therefore, it remains possible for such approximate variants to find the optimal set of inliers from which it can compute an exact solution.
As in the RANSAC, the accuracy of the resulting model mostly depends on the number $N_{iters}$ of iterations and thus of tested subsets. However, contrarily to the RANSAC, there is no other hyperparameter to tune: the size of the subsets is fixed to $d$ (for regression) or $D$ (for subspace estimation) from the analysis of Sect.~\ref{sec:exact}.

%%%%%%%%%%%%%%%%%%%%%%%%%%%%%%%%%%%%%%%%%%%%%%
\section{Experiments}
\label{sec:exp} 
%%%%%%%%%%%%%%%%%%%%%%%%%%%%%%%%%%%%%%%%%%%%%%

This section reports a few numerical results regarding first the exact algorithm for regression and then its approximate variant based on random sampling. 

\subsection{Exact algorithm}

We here evaluate the gain in computing time offered by the exact polynomial algorithms. In particular, we focus on Algorithm~\ref{alg:regression} for $\ell_{0,\epsilon}$-regression and compare its computing time with that needed to solve Problem~\ref{pb:min} with standard tools that can also compute exact solutions such as mixed-integer programming solvers. Specifically, for $p=0$, Problem~\ref{pb:min} can be reformulated as the mixed-integer linear program (MILP)
\begin{align}\label{eq:milp}
\min_{\g w\in [-W,W]^d, \g\beta\in\{0,1\}^N} &  \sum_{i=1}^N \beta_i\\
	s.t.\ &  y_i- \g w^T \g x_i - \epsilon \leq M\beta_i,\quad i=1,\dots,N, \nonumber\\
	   & \g w^T \g x_i - y_i - \epsilon\leq M\beta_i,\quad i=1,\dots,N, \nonumber
\end{align}
with binary variables $\beta_i$ encoding $\I{|y_i-\g w^T\g x_i| < \epsilon}$ and a constant $M$ large enough to upper bound any absolute error term, i.e., set as $M\geq \max_{i\in\{1,\dots,N\}} |y_i| +  d W \|\g x_i\|_\infty$. 

Experiments are conducted for random data sets of increasing size $N$ generated by $y_i=\g x_i^T \g w_0 + \xi_i + \nu_i$ for $\g x_i$ uniformly distributed in $[-5,5]^d$, $\g w_0=[1, -0.5, 0.8]^T$, a zero-mean Gaussian noise $\xi_i$ of variance $0.1$ and an outlying Gaussian noise $\nu_i$ of mean $100$ and variance $1000$ added only to $40\%$ of the data. Computing times reported in Table~\ref{tab:milp} refer to a parallel implementation in Matlab of Algorithm~\ref{alg:regression} and the use of CPLEX for solving~\eqref{eq:milp} (which also benefits from parallel processing) on a laptop equipped with an i5-7440HQ processor at 2.8GHz. 

\begin{table}
\centering
\caption{Comparison of the computing time  (mean $\pm$ standard deviation over 4 trials) of Algorithm~\ref{alg:regression} and that of solving the MILP~\eqref{eq:milp}. ``n/a" appears when the method did not terminate after $10$ hours.\label{tab:milp}}
\begin{tabular}{l|c|c|c}\hline
$N$ 	& 100 & 300 & 1000 \\\hline
MILP~\eqref{eq:milp}	& $0.5\pm 0.2$ s & $11.3 \pm 18.5$ min	& n/a \\ 
Algorithm~\ref{alg:regression}	& $3.9\pm 0.1$ s & $43.2\pm 0.8$ s & $25 \pm 0.4$ min \\\hline
\end{tabular}
\end{table}

While CPLEX can compete with Algorithm~\ref{alg:regression} on small data sets ($N=100$), the worst-case exponential complexity of MILPs makes it far slower when $N$ increases. In addition, its computing time highly varies between different trials for the same data set size, whereas that of Algorithm~\ref{alg:regression} is not influenced by the data and remains pretictable for given problem dimensions.

\subsection{Approximate variant}

We now compare the variant of Algorithm~\ref{alg:regression} for $p=2$ proposed in Sect.~\ref{sec:random} with the RANSAC when both algorithms use the same number of iterations ($N_{iters}=3000$). Figure~\ref{fig:ransac} shows how the errors, $\|\g w_0 - \g w\|_2/\|\g w_0\|_2$, between the target vector $\g w_0$ and the estimate $\g w$ evolve with the fraction $r$ of outliers ($\nu_i\neq 0$ for $rN$ data points). More precisely, we plot the mean errors over 100 trials with $\g w_0$ uniformly drawn in $[-5,5]^d$, $d=4$. 
The RANSAC uses $s=2d$, but similar results are obtained for other values.
The plot in Fig.~\ref{fig:ransac} indicates that the proposed random sampling can perform better than the RANSAC in highly perturbed regimes with $70\%$ of outliers or more. 
In addition, the random sampling variant of Algorithm~\ref{alg:regression} is obviously much faster than its exact version, leading to computing times similar to those of RANSAC and about 0.1 second in these experiments.

\begin{figure}
\centering
\includegraphics[width=0.5\linewidth]{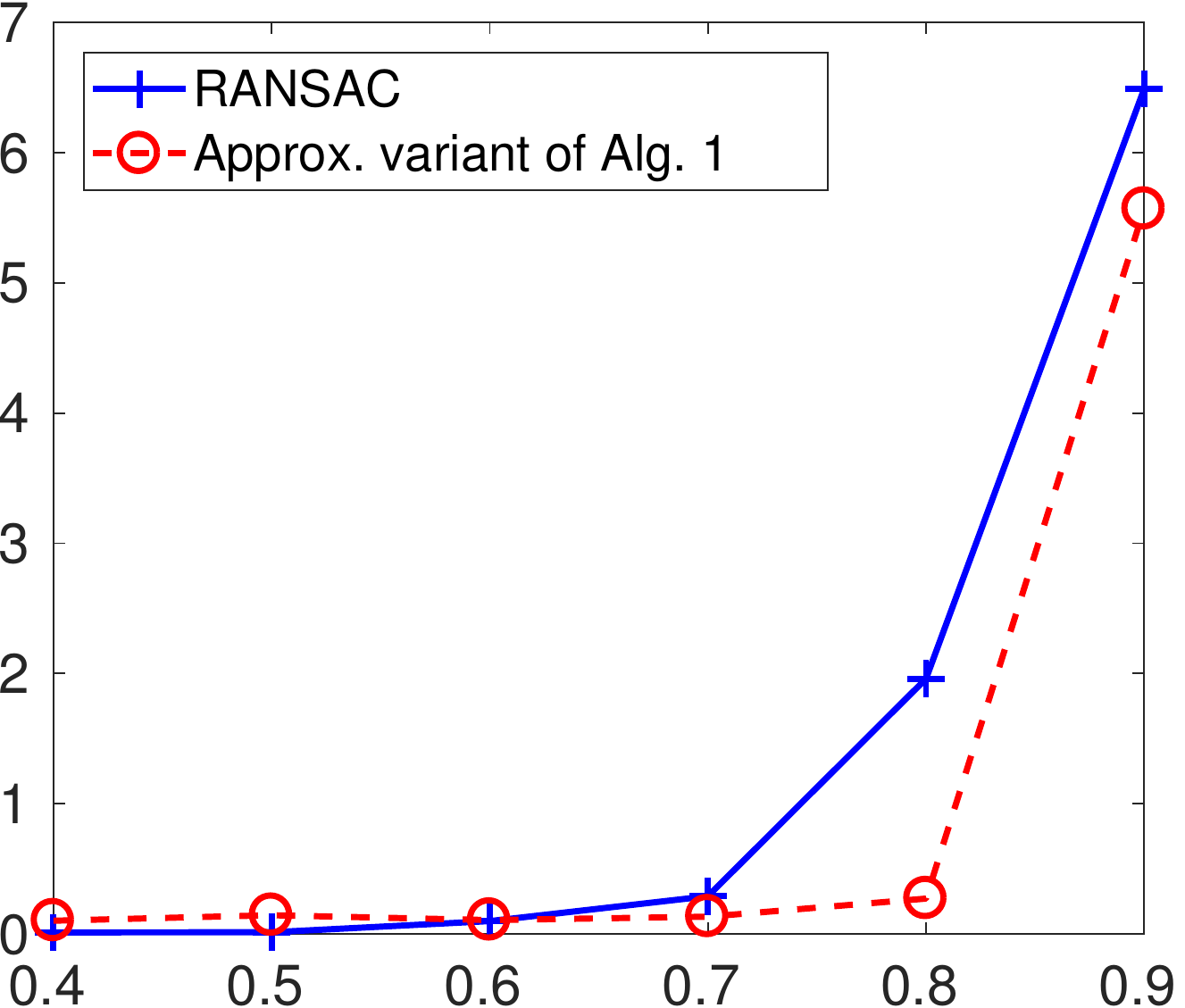}
\caption{Error versus the fraction of outliers.\label{fig:ransac}}
\end{figure}

%%%%%%%%%%%%%%%%%%%%%%%%%%%%%%%%%%%%%%%%%%%%%%
\section{Conclusions}
\label{sec:conclusion} 
%%%%%%%%%%%%%%%%%%%%%%%%%%%%%%%%%%%%%%%%%%%%%%

The paper analyzed the complexity of globally minimizing saturated loss functions for robust regression and subspace estimation with respect to the number of data. 
By deriving explicit connections between these estimation problems and linear classification, the paper could build on recent results on the enumeration of linear classifications to show that these global optimization problems have no more than a polynomial complexity in the number of data. Experiments showed that this provides a significant gain in speed when compared to a mixed-integer programming approach.

However, the developed algorithms have an exponential complexity wrt. the data dimension, which strongly limits their practical use. Therefore, approximate variants were proposed based on random sampling. Experiments showed that these variants can yield an increase of accuracy for a similar computational cost when compared with another classical method based on random sampling, namely, the RANSAC.

\end{document}